\documentclass{article}


\usepackage[preprint]{neurips_2026}

\usepackage[utf8]{inputenc} 
\usepackage[T1]{fontenc}    
\usepackage{hyperref}       
\usepackage{url}            
\usepackage{booktabs}       
\usepackage{amsfonts}       
\usepackage{nicefrac}       
\usepackage{microtype}      
\usepackage{xcolor}         

\usepackage{tabularx}

\usepackage{hyperref}




\usepackage{amsmath}
\usepackage{amssymb}
\usepackage{mathtools}
\usepackage{amsthm}

\usepackage{tikz}
\usetikzlibrary{automata, positioning, arrows}

\usepackage{algorithm}
\usepackage{algorithmic}
\usepackage{thm-restate}
\usepackage{booktabs}
\usepackage{multirow}
\usepackage{enumitem}

\usepackage[capitalize,noabbrev]{cleveref}

\theoremstyle{plain}
\newtheorem{theorem}{Theorem}[section]

\newtheorem{lemma}[theorem]{Lemma}

\theoremstyle{definition}

\theoremstyle{remark}


\usepackage{xspace}

\usepackage{tikz}
\usetikzlibrary{calc}
\usepackage{subcaption}
\usepackage{xcolor}
\definecolor{negrew}{HTML}{aa605c}
\definecolor{unsafe}{HTML}{E5533D}
\colorlet{dom}{black!50}

\newif\ifdraft
\draftfalse
\usepackage{import}
\ifdraft
\usepackage[draft,nompar]{commenting}
\else
\usepackage[nompar]{commenting}
\fi

\declareauthor{lo}{lo}{red}
\authorcommand{lo}{comment}

\declareauthor{dw}{dw}{blue}
\authorcommand{dw}{comment}

\declareauthor{ak}{ak}{brown}
\authorcommand{ak}{comment}

\newcommand\reb[1]{#1}

\newcommand\alg{{SB-TRPO}\xspace}
\newcommand\algl{{Safety-Biased Trust Region Policy Optimisation}\xspace}

\newcommand\del\Delta 
\newcommand\dotp[2]{\langle #1,#2\rangle} 
\newcommand\conv\mu
\newcommand\epsc{\epsilon_{k}}
\newcommand\epskl\delta
\newcommand{\defeq}{\coloneqq}
\newcommand\fimp{\beta}

\newcommand\po{{\pi_k}}
\newcommand\pn{{\pi}}
\newcommand\parao{{\theta_{k}}}

\newcommand\scale\eta
\newcommand\divc{\kappa}

\newcounter{problem}
\newcounter{update}

\newcommand{\problemtag}{%
  \refstepcounter{problem}%
  \tag{\textbf{Problem \theproblem}}
}
\newcommand\pref[1]{(\ref{#1})}

\newcommand{\updatetag}{%
  \refstepcounter{update}%
  \tag{\textbf{Update \theupdate}}
}

\title{SB-TRPO: Towards Safe Reinforcement Learning with Hard Constraints}

%

\author{%
Dominik Wagner$^{1*}$ \quad Ankit Kanwar$^{2*}$ \quad \textbf{Luke Ong}$^1$\\[3pt]
$^1$NTU Singapore \quad $^2$Sony Corporation\\[3pt]
\texttt{dominik.wagner@ntu.edu.sg}\\
}

\begin{document}

{\maketitle
	\def\thefootnote{*}\footnotetext{These authors contributed equally to this work.}
	}

\begin{abstract}
In safety-critical domains, reinforcement learning (RL) agents must often satisfy strict, zero-cost safety constraints while accomplishing tasks. Existing model-free methods frequently either fail to achieve near-zero safety violations or become overly conservative. We introduce \emph{Safety-Biased Trust Region Policy Optimisation (SB-TRPO)}, a principled algorithm for hard-constrained RL that dynamically balances cost reduction with reward improvement. At each step, SB-TRPO updates via a dynamic convex combination of the reward and cost natural policy gradients, ensuring a fixed fraction of optimal cost reduction while using remaining update capacity for reward improvement. Our method comes with formal guarantees of local progress on safety, while still improving reward whenever gradients are suitably aligned. Experiments on standard and challenging \emph{Safety Gymnasium} tasks demonstrate that SB-TRPO consistently achieves the best balance of safety and task performance in the hard-constrained regime.
\end{abstract}

\section{Introduction}
\label{sec:intro}
Reinforcement learning (RL) has achieved remarkable success in domains ranging from games to robotics, largely by optimising long-term cumulative rewards through trial-and-error interaction with an environment. However, in many real-world applications unconstrained reward maximisation is insufficient: agents must also satisfy safety or operational constraints. This problem is naturally formulated as a Constrained Markov Decision Process (CMDP, see \cite{cmdp}), which includes cost signals representing undesirable or unsafe outcomes in addition to rewards. Within this framework, safe RL has been studied under a wide range of objectives and algorithmic paradigms (see \citep{WachiSS24,DBLP:journals/pami/GuYDCWWK24} for surveys).

Most existing work formulates safe RL as maximising discounted expected reward subject to the discounted expected cost remaining below a positive threshold (e.g., \citep{Ray2019BenchmarkingSE,cpo,cup}). This formulation is appropriate when costs represent \emph{soft constraints}, such as resource consumption or wear-and-tear -- for example, an autonomous drone that must complete its mission before its battery is depleted. In such cases, incurring costs can be acceptable provided their expected frequency remains low.

However, in many safety-critical domains any non-zero cost corresponds to catastrophic failure. Robots operating near humans cannot afford collisions, and autonomous vehicles must never violate speed or distance limits in safety-critical zones. In these settings, expected-cost formulations with a positive cost threshold are misaligned with the true safety objective: policies must avoid safety violations entirely, rather than merely keeping them small in expectation.
 This motivates a \emph{hard-constrained formulation}, in which safety violations are unacceptable and policies must achieve zero cost with high reliability.

\paragraph{Limitations of Existing Approaches.}
Despite substantial progress in constrained RL, existing model-free methods struggle in hard-safety regimes. Lagrangian approaches, such as TRPO- and PPO-Lagrangian \citep{Ray2019BenchmarkingSE}, rely on penalty multipliers to trade off reward and cost. In practice, this often leads to either unsafe policies (when penalties are small) or overly conservative behaviour with poor task completion (when penalties are large), with performance highly sensitive to hyperparameter choices. Moreover, once penalties increase, recovery from conservative regimes is difficult, particularly under sparse or binary cost signals.

Projection-based methods such as Constrained Policy Optimization (CPO) \citep{cpo} explicitly attempt to maintain feasibility at each update. While appealing in principle, this often results in severely restricted update directions or vanishing step sizes in challenging environments, leading to slow learning and low reward. More recent methods, including CRPO \citep{crpo} and PCRPO \cite{guetalaaai2024}, alternate between cost and reward optimisation or combine gradients via simple heuristics. When safety violations are frequent or hard to eliminate, these approaches tend to prioritise cost reduction almost exclusively, again resulting in poor task performance.

Overall, existing methods are primarily designed for \emph{soft} constraints and degrade qualitatively when applied to hard-safety settings: they either violate safety constraints or achieve safety at the cost of extremely low task performance. This gap motivates the need for algorithms that are explicitly designed for hard-constrained reinforcement learning and can make reliable progress towards safety while still enabling meaningful reward optimisation.

\paragraph{Our Approach.}
We propose \emph{\algl (\alg)}, a principled algorithm for reinforcement learning with hard safety constraints, which dynamically balances progress towards safety and task performance.
Conceptually, \alg combines the cost and reward natural policy gradients in a dynamically computed convex combination, enforcing a fixed fraction of optimal cost reduction while directing the remaining update capacity towards reward improvement.
Unlike CPO and related methods, \alg \emph{does not} switch into separate feasibility-recovery phases, avoiding over-conservatism and producing smoother learning dynamics.  This ensures reliable progress on safety without unnecessarily sacrificing task performance and generalises CPO, which is recovered when maximal safety progress is enforced.

\paragraph{Contributions.}  
We summarise our main contributions:
\begin{itemize}[noitemsep,topsep=0pt]
    \item We present a new perspective on hard-constrained policy optimisation, introducing a principled trust-region method that casts updates as dynamically controlled trade-offs between safety and reward improvement.
    \item We underpin this method with theoretical guarantees, showing that every update yields a local reduction of safety violations while still ensuring reward improvements whenever the gradients are suitably aligned.
    
    \item We empirically evaluate SB-TRPO on a suite of \emph{Safety Gymnasium} tasks, demonstrating that it consistently achieves meaningful task completion with high safety, outperforming state-of-the-art safe RL baselines.
\end{itemize}

Finally, we contend that hard-constrained regimes are rarely addressed explicitly in model-free safe reinforcement learning, despite being the appropriate formulation for genuinely safety-critical applications. By specifically targeting this setting, our work seeks to draw the community’s attention to a practically important yet largely neglected problem class.


\section{Related Work}
\label{sec:related}

\paragraph{Primal–Dual Methods.}
Primal–dual methods for constrained RL solve a minimax problem, maximising the penalised reward with respect to the policy while minimising with respect to the Lagrange multiplier to enforce cost constraints. RCPO \citep{rcpo} is an early two-timescale scheme using vanilla policy gradient and slow multiplier updates. Under additional assumptions (e.g.\ that all local optima are feasible), it converges to a locally optimal feasible policy. Likewise, TRPO-Lagrangian and PPO-Lagrangian \citep{Ray2019BenchmarkingSE} use trust-region or clipped-surrogate updates for the primal policy parameters. PID Lagrangian methods \citep{stooke2020responsivesafetyreinforcementlearning} augment the standard Lagrangian update with additional terms, reducing oscillations when overshooting the safety target is possible.
Similarly, APPO \citep{Dai_Ji_Yang_Zheng_Pan_2023}, which is based on PPO, augments the Lagrangian with a quadratic deviation term to dampen cost oscillations.

\paragraph{Trust-Region and Projection-Based Methods.}
CPO \citep{cpo} enforces a local approximation of the CMDP constraints within a trust region and performs pure cost-gradient steps when constraints are violated, aiming to always keep policy updates feasible.
FOCOPS
\citep{focops} is a first-order method solving almost the same abstract problem for policy updates in the nonparameterised policy space and then projects it back to the parameterised policy.
P3O \citep{ijcai2022p3o} is a PPO-based method motivated by CPO. It uses a Lagrange multiplier, increasing linearly to a fixed upper bound, and clipped surrogate updates to balance reward improvement with constraint satisfaction.
CUP \citep{cup}, in contrast to CPO, formulates surrogate reward and cost objectives using generalised performance bounds and GAEs, and projects each policy gradient update into the feasible set defined by these surrogates, allowing updates to jointly respect reward and cost approximations.
\cite{ctrpo} propose C\!-TRPO, which reshapes the geometry of the policy space by adding a barrier term to the KL divergence so trust regions contain only safe policies (see \cref{sec:ctrpo}).  
Like CPO, C\!-TRPO switches to a dedicated recovery phase with pure cost-gradient updates when the policy becomes infeasible.  
\cite{c3po} introduces C3PO, which resembles P3O and relaxes hard constraints using a clipped penalty on constraint violations.  

\paragraph{Reward–Cost Switching.}
CRPO \citep{crpo} is a constrained RL method that alternates between reward maximisation and cost minimistion depending on whether the current cost estimate indicates the policy is infeasible. 
Motivated by CRPO’s tendency to oscillate between purely reward- and cost-focused updates, \cite{guetalaaai2024} propose PCRPO, which mitigates conflicts between reward and safety gradients using a softer switching mechanism and ideas from gradient manipulation \cite{pcgrad,cagrad}. 
When both objectives are optimised, the gradients are averaged, and if their angle exceeds $90^\circ$, each is projected onto the normal plane of the other.

\paragraph{Model-Based and Shielding Approaches.}
Shielding methods \citep{shielding,belardinelli2025probabilistic,jansen2020safe} intervene when a policy proposes unsafe actions, e.g.\ via lookahead or model predictive control, but typically require accurate dynamic models and can be costly. \citep{DBLP:conf/nips/Yu0Z22} train an auxiliary policy to edit unsafe actions, balancing reward against the extent of editing. Other typically model-based methods leverage control theory \citep{perkins2002lyapunov,berkenkamp2017safe,chow2018lyapunov,wang2023enforcing}.

\section{Problem Formulation}
\label{sec:background}
We consider a \emph{constrained Markov Decision Process} (CMDP) defined by the tuple 
$(\mathcal{S}, \mathcal{A}, P, r, c, \gamma)$, where $\mathcal{S}$ and $\mathcal{A}$ are 
the (potentially continuous) state and action spaces, $P(s' \mid s,a)$ is the transition kernel, 
$r: \mathcal{S} \to \mathbb{R}$ is the reward function, 
$c: \mathcal{S} \to \mathbb{R}_{\ge 0}$ is a cost function representing unsafe events, 
and $\gamma \in (0,1)$ is the discount factor. A stochastic policy $\pi(a \mid s)$ induces a 
distribution over trajectories $\tau = (s_0,a_0,s_1,a_1,\dots)$ together with $P$. 

In safety-critical domains, the objective is to maximise discounted reward while ensuring that unsafe states are never visited, i.e.\ $c(s_t)=0$ for all times $t$ almost surely:
\begin{align*}
\problemtag
\label{eq:hard}
\max_\pi \quad  &J_r(\pi) \defeq \mathbb{E}_{\tau\sim\pi} \left[ \sum_{t=0}^{\infty} \gamma^t\cdot r(s_t) \right]\qquad
\text{s.t.} \quad  &\mathbb P_{\tau\sim\pi} \left[ \exists t \in \mathbb{N}\ldotp c(s_t) > 0 \right] = 0
\end{align*}
\changed[dw]{Since costs are non-negative, any policy feasible under the following standard CMDP with expected discounted cost threshold of $0$ is also feasible 
for the original \pref{eq:hard}:}
\begin{align*}
\problemtag
\label{eq:relaxed}
\hspace{-1mm}\max_\pi\quad &J_r(\pi)\defeq\mathbb{E}_{\tau\sim\pi} \left[ \sum_{t=0}^{\infty} \gamma^t\cdot r(s_t) \right]
\quad\text{s.t.}\quad  J_c(\pi)\defeq
\mathbb{E}_{\tau\sim\pi} \left[ \sum_{t=0}^{\infty} \gamma^t \cdot c(s_t) \right] = 0
\end{align*}
This re-formulation enables the use of off-the-shelf constrained policy optimisation techniques 
whilst in principle still enforcing strictly safe behaviour. 

\paragraph{Discussion.}  
Most prior work in Safe RL focuses on \emph{positive cost thresholds} (e.g.\ 25 in \emph{Safety Gymnasium} \citep{ji2023safety}). However, in genuinely safety-critical applications there is no meaningful notion of an ``acceptable'' amount of catastrophic failure. Positive thresholds conflate the \emph{problem specification} with an \emph{algorithmic hyperparameter}, making training brittle, environment-dependent, and misaligned with true safety objectives (see \cref{sec:clim}).  

Our zero-cost formulation instead directly encodes the intrinsic hard-safety requirement. Conceptually, it provides a clear, principled target for algorithm design: policies should strictly avoid unsafe states, and algorithm design can focus on achieving this reliably rather than tuning a threshold that only approximates strict safety and explicitly permits a certain level of unsafety.



\section{Method}
\label{sec:method}
We first derive the idealised SB-TRPO update, before presenting a practical approximation together with its performance-improvement guarantees. Finally, we summarise the overall method in a complete algorithmic description.


\subsection{Safety-Biased Trust Region Update}
\label{sec:ideal}

In the setting of hard (zero‐cost) constraints, the idealised CPO \citep{cpo} updates seeks to improve the current policy $\pi_k$ by maximising reward amongst feasible policies within the KL trust region:
\begin{align}
\label{eq:cpo}
\tag{CPO}
    \max_\pn \quad  J_r(\pn)
    \quad \text{s.t.} \quad
     &J_c(\pn)\leq 0,
     \qquad
    D_{\mathrm{KL}}^{\max}(\po\parallel\pn) \leq \epskl
\end{align}
When no such policy exists, CPO (and related methods such as C-TRPO) switches to a recovery step:
\begin{align}
\label{eq:recov}
\tag{Recovery}
    \min_\pn \quad  J_c(\pn)
    \quad \text{s.t.}
    \quad D_{\mathrm{KL}}^{\max}(\po\parallel\pn) \leq \epskl
\end{align}
Empirically, the \pref{eq:recov} step drives the cost down, but often at the expense of extreme conservatism and with no regard for task reward (by design).
Once a zero-cost policy has been found, CPO switches back to the feasibility-preserving update \pref{eq:cpo}. Starting from these overly cautious policies, any improvement in task reward typically requires temporarily increasing the cost. This is ruled out by the constraint in \pref{eq:cpo}. As a consequence, CPO gets ``stuck’’ near low-cost but task-ineffective policies (cf.~\cref{sec:exp}), never escaping overly conservative regions of the policy space.

To address this limitation, we propose a more general update rule that seeks high reward while requiring only a controlled \emph{reduction} of cost by at least $\epsc\geq 0$, \changed[dw]{which is adjusted dynamically and depends on the current policy $\po$ (see \cref{eq:approxctr} below)}:
\begin{align*}
\updatetag
\label{eq:general}
    \max_\pn \quad  &J_r(\pn)
    \qquad 
    \text{s.t.}\quad
     &J_c(\pn)  \leq J_c(\po)-\epsc,\quad
    D_{\mathrm{KL}}^{\max}(\po\parallel\pn) \leq \epskl
\end{align*}
Note that CPO corresponds to the special case $\epsc = J_c(\po)$, which forces feasibility at every iteration. In contrast, this formulation does \emph{not} require the intermediate policies to remain feasible (see \cref{sec:discussion} for an extended discussion).

To ensure that \pref{eq:general} always admits at least one solution, and to avoid the need for an explicit recovery step such as \pref{eq:recov}, we choose $\epsc$ to be a fixed fraction of the best achievable cost improvement inside the trust region. Formally, for a \emph{safety bias} $\fimp \in (0,1]$, we define
\begin{align}
\label{eq:approxctr}
\epsc&\defeq\beta\cdot (J_c(\po)-c^*_{\po})&\text{ where}\quad
    c^*_{k}&\defeq\min_\pi\; J_c(\pi)
    \quad \text{s.t.}\quad
    D_{\mathrm{KL}}^{\max}(\po\parallel\pi) \leq \epskl
\end{align}
\dw{check that references are clear since both definitions have been combined}
 Crucially, this guarantees feasibility of \pref{eq:general} as well as $\epsc \ge 0$. The parameter $\fimp$ controls how aggressively the algorithm insists on cost reduction at each step.
 
 \paragraph{$\beta=1$ Recovers CPO.}  Setting $\fimp = 1$ forces each update to pursue \emph{maximal} cost improvement within the trust region, reducing the cost constraint in \pref{eq:general} to $J_c(\pn)\leq c^*_{k}$. Note that $c_{k}^*=0$ iff \pref{eq:cpo} is feasible. Thus, \pref{eq:general} for $\fimp=1$ elegantly \emph{captures both the recovery and standard feasible phases of CPO}.

\paragraph{Intuition for Using a Safety Bias $\fimp < 1$.}
 Choosing $\fimp < 1$ intentionally relaxes this requirement: the policy must still reduce cost, but only by a \emph{fraction} of the optimal improvement. This slack provides room for reward-directed updates even when the maximally cost-reducing step would be overly restrictive. In practice, $\fimp < 1$ prevents the algorithm from getting trapped in low-reward regions and enables steady progress towards both low cost and high reward.

\changed[dw]{\paragraph{Relaxation of Cost Decrease vs.\ Relaxed Target Threshold.} We stress that \pref{eq:general} does \emph{not} solve a CMDP with a fixed relaxed positive cost threshold $J_c(\pi)\leq d$. Rather, the required cost decrease $\epsc$ at step $k$ is dynamically adjusted via \cref{eq:approxctr} based on the current policy.  The result is monotonic cost descent towards a local cost minimum (rather than stopping once the ad-hoc threshold $d$ is reached), and optimal reward amongst policies with the lowest cost in the trust region:
}

\dw{be a bit careful since the last point does seem to use a different cost threshold!}

\begin{restatable}[Properties of \pref{eq:general}]{lemma}{general}
\label{thm:general}
Let $\pi_0,\pi_1,\dots$ be the sequence of policies of \pref{eq:general}.
\begin{enumerate}[noitemsep,topsep=0pt]
    \item The cost is monotonically decreasing: $J_c(\pi_{k+1}) \le J_c(\pi_k)$ for all $k$.
    \item Whenever no cost decrease is possible, reward improves: $J_c(\pi_{k+1}) = J_c(\pi_k)$ implies $J_r(\pi_{k+1}) \ge J_r(\pi_k)$.
    \item If for some $K$ neither cost nor reward improves,
    $J_c(\pi_{K+1}) = J_c(\pi_K)$ and $J_r(\pi_{K+1}) = J_r(\pi_K)$, then $\pi_K$ is a trust-region local optimum of both cost and reward (amongst policies achieving that optimal cost):
\begin{align*}
    J_c(\pi_K) &= \min_{\pi: D_{\mathrm{KL}}^{\max}(\pi_K \parallel \pi) \le \epskl} J_c(\pi) = c_{K}^*&
        J_r(\pi_K) &= \max_{\substack{\pi: J_c(\pi) \le c_{K}^*,\\ D_{\mathrm{KL}}^{\max}(\pi_K \parallel \pi) \le \epskl}} J_r(\pi)
\end{align*}
\end{enumerate}
\end{restatable}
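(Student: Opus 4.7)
The plan is to prove all three items by direct unpacking of the definition $\epsc = \fimp \cdot (J_c(\po) - c^*_{\po})$ followed by elementary feasibility reasoning; no heavier machinery is required. As preliminaries, I would record that $\po$ is trivially feasible for the minimisation in \pref{eq:approxctr}, so $c^*_{\po} \le J_c(\po)$ and hence $\epsc \ge 0$; and that the optimiser of \pref{eq:approxctr} satisfies the KL constraint while achieving cost $c^*_{\po} \le J_c(\po) - \epsc$ (using $\fimp \le 1$), so \pref{eq:general} is always feasible and $\pi_{k+1}$ is well-defined. Item~1 is then immediate from the chain $J_c(\pi_{k+1}) \le J_c(\pi_k) - \epsc \le J_c(\pi_k)$.

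For item~2, I would observe that $J_c(\pi_{k+1}) = J_c(\pi_k)$ combined with the cost constraint of \pref{eq:general} forces $\epsc = 0$, and since $\fimp > 0$ this is in turn equivalent to $c^*_{\pi_k} = J_c(\pi_k)$. With $\epsc = 0$, the policy $\pi_k$ itself satisfies both constraints of \pref{eq:general}, so the optimality of $\pi_{k+1}$ yields $J_r(\pi_{k+1}) \ge J_r(\pi_k)$.

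For item~3, I would combine the two previous observations. Under the hypothesis, the argument from item~2 already delivers $c^*_{\pi_K} = J_c(\pi_K)$, which is the first equality. With $\epsc = 0$ at step $K$, the feasible set of \pref{eq:general} coincides with $\{\pi : J_c(\pi) \le c^*_{\pi_K},\; D_{\mathrm{KL}}^{\max}(\pi_K \parallel \pi) \le \epskl\}$, the set appearing in the second claim; since $\pi_K$ lies in this set and $J_r(\pi_{K+1}) = J_r(\pi_K)$, the old policy $\pi_K$ must itself attain the maximum.

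None of these steps is genuinely difficult; the proof is essentially bookkeeping around the definition of $\epsc$. The one point worth care is ensuring that \pref{eq:general} always admits a solution, which is where the choice $\fimp \in (0,1]$ plays a dual role: taking $\epsc$ as a fraction of the maximal cost improvement keeps the update well-defined (via $\fimp \le 1$), while $\fimp > 0$ is what allows the rigidity arguments in items~2 and~3 to conclude $c^*_{\pi_k} = J_c(\pi_k)$ from $\epsc = 0$.
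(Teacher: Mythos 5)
Your proposal is correct and follows exactly the same route as the paper, whose own proof is just a one-line sketch noting that $J_c(\pi_{k+1})=J_c(\pi_k)$ forces $\epsilon=0$ and that everything then follows from the definition of the update. Your write-up simply fills in the bookkeeping (feasibility of \pref{eq:general} via $\fimp\le 1$, and the rigidity $\epsilon=0\Leftrightarrow c^*_{\pi_k}=J_c(\pi_k)$ via $\fimp>0$) that the paper leaves implicit.
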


\subsection{Approximate Solution}
\label{sec:approx}

Since direct evaluation of $J_r(\pi)$ and $J_c(\pi)$ for arbitrary $\pi$ is infeasible, we employ the standard \emph{surrogate objectives} \citep{TRPO} with respect to a reference policy $\po$.  
For reward (and analogously for cost),
\begin{align*}
    \mathcal L_{r,\po}(\pi) \defeq 
    J_r(\po)+  \mathbb E_{s,a\sim\po}\!\left[
    \frac{\pi(a \mid s)}{\po(a \mid s)}\cdot A_r^{\po}(s,a)
    \right]
\end{align*}
By the Policy Improvement Bound \citep{TRPO}, 
\begin{align*}
    \left|\mathcal L_{r,\po}(\pi)-J_r(\pi)\right|\leq C_{r,\po}\cdot D_{\mathrm{KL}}^{\max}(\po\parallel\pi)
\end{align*}
for a constant $C_{r,\po}\geq 0$.
In particular, $J_r(\po)=\mathcal L_{r,\po}(\po)$, and similar bounds hold for~$J_c$.
These observations justify approximating the idealised \pref{eq:general} by its surrogate:
\begin{align}
\updatetag
\label{eq:ideal}
    \max_\pn \quad &\mathcal L_{r,\po}(\pn)\qquad
    \text{s.t.} \quad
    &\mathcal L_{c,\po}(\pn)  \leq \mathcal L_{c,\po}(\po)-\epsc,\qquad
 D_{\mathrm{KL}}^{\max}(\po\parallel\pn) \leq \epskl
    \notag
\end{align}
As in TRPO, the KL constraint guarantees that the surrogates remain close to the true performances.

\paragraph{Second-Order Approximation.}
Henceforth, we assume differentiably parameterised policies $\pi_\theta$ and overload notation, e.g. $J_c(\theta)$ for $J_c(\pi_\theta)$. Linearising the reward and cost objectives around the current parameters $\parao$ and approximating the KL divergence by a quadratic form with the Fisher information matrix 
yields the quadratic program
\begin{align}
\updatetag
\label{eq:approx}
\max_{\del \in \mathbb R^d} \; \dotp{g_r} \del
\quad \text{s.t.} \quad
\dotp{g_c} \del \leq -\epsc,
\quad \tfrac{1}{2}\cdot\del^\top\cdot F\cdot \del \leq \epskl,
\end{align}
where $g_r\defeq \nabla\mathcal L_{r,\parao}(\parao)$, $g_c\defeq\nabla\mathcal L_{c,\parao}(\parao)$ and
$F$ denotes the Fisher information matrix.\footnote{The KL constraint corresponds to bounding the \emph{sample-average expected} KL divergence under the current policy, as in TRPO, rather than the maximum KL across states.
}
To choose $\epsc$ and approximate \cref{eq:approxctr}, we analogously approximate the trust-region step that maximally decreases the cost surrogate
\begin{align}
        \label{eq:approxc}
    \del_c &\defeq \arg\min_{\del} \; \dotp{g_c} \del
        \quad \text{s.t.} \quad \tfrac{1}{2}\cdot \del^\top\cdot F\cdot \del \le \epskl.
\end{align}
and set $\epsc \defeq -\fimp\cdot\dotp{g_c}{\del_c}$ with safety-bias $\fimp\in(0,1]$.
This guarantees feasibility of \eqref{eq:approx}.
 The new parameters are then updated via $\theta_{k+1}\defeq \parao + \del^*$, with $\del^*$ the solution of \pref{eq:approx}, analogous to the TRPO step but enforcing an explicit local cost reduction.

\paragraph{Approximate Solution via Cost-Biased Convex Combination.}
Solving \pref{eq:approx} analytically requires computing the natural gradients $F^{-1}g_r$ and $F^{-1}g_c$ and their coefficients (cf.~\cref{prop:analytic}), which is numerically fragile for large or ill-conditioned $F$.  
Instead, we follow TRPO and compute the KL-constrained reward and cost steps separately using the conjugate gradient method \citep{CG}:
\begin{align}
\label{eq:approxr}
    \del_r &\defeq \arg\max_{\del} \; \dotp{g_r} \del
        \quad \text{s.t.} \quad \tfrac{1}{2} \cdot\del^\top\cdot F\cdot \del \le \epskl
\end{align}
with $\del_c$ given by \cref{eq:approxc}.  
Both steps satisfy the same KL constraint, and thus any convex combination
$\del \defeq (1-\conv)\cdot \del_r + \conv\cdot \del_c$, where \(\conv\in[0,1]\), also satisfies it (see \cref{lem:kl} in \cref{app:surr}).
We therefore choose the largest reward-improving combination which also satisfies $\dotp{g_c}\del\leq-\epsc$ (see \cref{fig:conv} for a visualisation and \cref{lem:conv}):
\begin{align}
\label{eq:conv}
    \hspace*{-13mm}\conv \defeq 
    \begin{cases}
        \frac{\dotp{g_c} {\del_r} + \epsc}{\dotp{g_c}{\del_r}-\dotp{g_c}{\del_c}} & \text{if $\dotp{g_c}{\del_c}\neq \dotp{g_c} {\del_r}$ and $\dotp{g_c} {\del_r}\geq-\epsc$}\\
        0 & \text{otherwise}
    \end{cases}
\end{align}

\subsection{Performance Improvement}
\label{sec:perf}

This approximation of \pref{eq:approx} ensures an approximate cost decreases at each update:
\begin{align*}
    J_c(\theta+\del)-J_c(\theta) \approx \dotp {g_c}{\del}\leq \fimp\cdot\dotp{g_c}{\del_c}\leq 0
\end{align*}
Likewise, if the reward gradient is non-zero and its angle with $\del_c$ does not exceed $90^\circ$, an increase in reward is guaranteed. Formally, sufficiently small steps along 
$\del$ are required for the local gradient approximations to hold:
\begin{restatable}[Performance Improvement]{theorem}{perf}
\label{thm:perf}
\changed[dw]{Let $\del \defeq (1-\conv)\cdot \del_r + \conv\cdot \del_c$ for $\del_r$, $\del_c$ and $\conv$ defined in \cref{eq:conv,eq:approxr,eq:approxc}, respectively.}
There exists $\scale\in(0,1]$ such that
\begin{enumerate}[noitemsep,topsep=0pt]
    \item if $g_c\neq0$ then $J_c(\theta+\scale\cdot\del)<J_c(\theta)$
    \item if $g_r\neq 0$ and $\dotp{g_r}{\del_c}\geq 0$ (in particular, $g_c=0$) then $J_r(\theta+\scale\cdot\del)\geq J_r(\theta)$.  
\end{enumerate}
\end{restatable}
This result complements \cref{thm:general} with guarantees for the gradient-based approximate solution of \pref{eq:general}.
Unlike CPO and related methods such as C-TRPO \citep{ctrpo}, which have a separate feasibility recovery phase, our update can guarantee reward improvement for all steps in which the angle between cost and reward gradients does not exceed $90^\circ$.
Once the cost gradient is near zero ($g_c \approx 0$), updates focus on reward improvement (see also \cref{fig:cost_imp}). Overall, this scheme dynamically balances cost and reward objectives, naturally stabilising learning.

\newcommand\plotfig[5]{
    \begin{tikzpicture}[scale=1.0,>=stealth]
    
    
    \def\rx{#2}
    \def\ry{#3}
    \def\cx{#4}
    \def\cy{#5}
    
    \coordinate (O) at (0,0);
    \coordinate (r) at (\rx,\ry);
    \coordinate (c) at (\cx,\cy);
    
    \def\cr{(-\cx*\rx-\cy*\ry)}
    \def\cc{(-\cx*\cx-\cy*\cy)}
    \pgfmathsetmacro\conve{min(1,max(0, (\cr-#1*\cc)/(\cr-\cc+0.0001) ))}
    \pgfmathsetmacro{\converound}{round(100*\conve)/100} 
    \pgfmathsetmacro\epsi{-#1*\cc}
    
    \coordinate (dir) at ($ (r)! \conve ! (c) $);
    
    \draw[->,very thick,blue!80!black] (O) -- (dir) node[midway, left] {$\del$};
    \draw[->,densely dotted,thick,red!80!black] (O) -- (r) node[midway, below] {$\del_r$};
    \draw[->,densely dashed,thick,green!70!black] (O) -- (c) node[midway, left] {$\del_c$};
    \filldraw[black] (O) circle (1pt) node[below left] {};
    
    \node (lab) at (0.3,-0.4) {\small$\conv=\pgfmathprintnumber[fixed,precision=2]{\conve}$};

    \end{tikzpicture}
}

\begin{figure}[t]
    \centering
    \hfill
    \begin{subfigure}[c]{0.23\linewidth}
        \plotfig{0.7}{1.5}{0.7}{-1}{1}
    \end{subfigure}
    \hfill
    \begin{subfigure}[c]{0.23\linewidth}
        \plotfig{0.7}{1.5}{0.4}{1}{1}
    \end{subfigure}
    \hfill
    \begin{subfigure}[c]{0.23\linewidth}
        \plotfig{0.7}{1}{.3}{1}{1}
    \end{subfigure}
    \hfill
    \begin{subfigure}[c]{0.23\linewidth}
        \plotfig{0.7}{1.5}{0.5}{-1}{-1}
    \end{subfigure}
    \hfill
    
    \caption{Visualisation of the adaptive convex combination $\del$ of $\del_r$ and $\del_c$ given by \cref{eq:conv} for $\epsc\defeq 1.4=-\fimp\cdot\dotp {g_c}{\del_c}$, where $\fimp\defeq0.7$, and the special case that $\del_r=g_r$ and $\del_c=-g_c$.}
    \label{fig:conv}
\end{figure}

\subsection{Practical Algorithm}
\label{sec:imp}

Our practical algorithm, implementing the ideas of the preceding subsections, is given in \cref{alg}.
We now highlight the key additions.

\begin{algorithm}[t]
    \caption{\algl (\alg)}
    \label{alg}
    \begin{algorithmic}[1] 
    \REQUIRE KL divergence limit $\epskl>0$, safety bias $\fimp\in[0,1]$, training epochs $N\in\mathbb N$ 
    \STATE initialise $\theta$
    \STATE $\divc\leftarrow 10^{-8}$ \COMMENT{small constant to avoid division by $0$}
        \FOR{$N$ epochs}
        \STATE $\mathcal D\leftarrow$ collect rollouts under policy $\pi_\theta$
        \STATE $g_r,g_c\leftarrow $ policy gradient estimate of $\nabla\mathcal L_{r,\pi_\theta}(\pi_\theta)$ and $\nabla\mathcal L_{c,\pi_\theta}(\pi_\theta)$ using $\mathcal D$
        \STATE $\del_r,\del_c\leftarrow$ calculated via the conjugate gradient algorithm to optimise \cref{eq:approxr,eq:approxc} 
        \STATE $\mu \leftarrow \max \left\{0, \frac{\dotp{g_c} {\del_r} - \fimp\cdot\dotp {g_c}{\del_c}}{\dotp{g_c}{\del_r} - \dotp{g_c}{\del_c}+\divc}\right\}$

        \STATE $\del\leftarrow(1-\conv)\cdot \del_r + \conv\cdot \del_c$

        \STATE $\scale \leftarrow 1$ \COMMENT{line search for constraint satisfaction}
\REPEAT
  \STATE decrease $\scale$
\UNTIL{$\widehat {D_{KL}}[\pi_\theta \parallel \pi_{\theta+\scale\cdot\del}] \leq \epskl$ and $\widehat{\mathcal L_{c,\pi_\theta}}(\pi_{\theta+\scale\cdot\del}) \leq \widehat{\mathcal L_{c,\pi_\theta}}(\pi_\theta)$ 
}

        \STATE $\theta \leftarrow\theta + \scale\cdot\del$
        \ENDFOR
    \end{algorithmic}
\label{alg: CTRPO}
\end{algorithm}

\paragraph{Line Search.}
In practice, we follow standard TRPO procedures by performing a line search along the update direction $\del$: the KL constraint is enforced empirically to ensure that the surrogate approximations of $J_r$ and $J_c$ remain accurate; and the empirical reduction in the surrogate cost loss is checked to operationalise the cost decrease guarantee of \cref{thm:perf}, scaling the step if necessary.

\paragraph{Gradient Estimation.} 
 The gradients of the surrogate objective 
 $g_r=\nabla\mathcal L_{r,\parao}(\parao)$ (analogously for cost) coincide with $\nabla J_r(\parao)$, which is Monte Carlo estimated using the policy gradient theorem. The Fisher information matrix $F$ is similarly estimated from trajectories. 

\section{Empirical Evaluation}
\label{sec:exp}
\subsection{Experimental Setup}
\label{sec:setup}

\paragraph{Benchmarks.}


\emph{Safety Gymnasium} provides a broad suite of environments for benchmarking safe RL algorithms.
Originally introduced by OpenAI \citep{Ray2019BenchmarkingSE}, it has since been maintained and extended by \cite{ji2023safety}, who also developed \emph{SafePO}, a library of safe RL algorithms.\footnote{Some studies customise the environments  \citep{JayantB22,DBLP:conf/nips/Yu0Z22}, e.g.\ by providing more informative observations, making direct comparison of raw performance metrics across works impossible. 
To ensure fairness and facilitate future comparative studies, we report results only on the standard, unmodified Safety Gymnasium environments \citep{ji2023safety}.
}
We focus on two classes of environments: \emph{Safe Navigation} and \emph{Safe Velocity}. 
In Safe Navigation, a robot equipped with LIDAR sensors must complete tasks while avoiding unsafe hazards. 
We use the \emph{Point} and \emph{Car} robots on four tasks -- \emph{Push}, \emph{Button}, \emph{Circle}, and \emph{Goal} -- at \emph{level 2}, which is most challenging due to the greatest number of hazards. 
In contrast, \emph{Safe Velocity} environments test adherence to velocity limits rather than obstacle avoidance, and we use the \emph{Hopper} and \emph{Swimmer} robots adapted from the MuJoCo locomotion suite \citep{todorov2012mujoco}. 
The state spaces are up to 88-dimensional; see \cref{app:sge} for more details.

\paragraph{Baselines.}
We compare our approach against a wide range of  state-of-the-art implementations from SafePO \citep{ji2023safety} of model-free constrained RL algorithms: TRPO- and PPO-Lagrangian \citep{Ray2019BenchmarkingSE}, CPO \citep{cpo}, CUP \citep{cup}, \reb{CPPO-PID \citep{stooke2020responsivesafetyreinforcementlearning}, FOCOPS \citep{focops}, RCPO \citep{rcpo}, PCPO \citep{pcpo}, P3O \citep{ijcai2022p3o}, C-TRPO \citep{ctrpo} and C3PO \citep{c3po}}.\footnote{\label{footnote:p3o-appo-crpo}Code of reward–cost switching methods such as (P)CRPO \citep{crpo,guetalaaai2024} and APPO \citep{Dai_Ji_Yang_Zheng_Pan_2023} is not publicly available \citep{DBLP:journals/pami/GuYDCWWK24}.}
All baselines require a cost tolerance threshold, which we set to $0$ to target almost-sure safety (cf.~\pref{eq:relaxed}).

\paragraph{Implementation Details.}
Following the default setup in SafePO, we use a homoscedastic Gaussian policy with state-independent variance, the mean of which is parameterised by a feedforward network with two hidden layers of 64 units.
Besides, we use the default SafePO settings for existing hyperparameter (cf.~\cref{app:ht}), and \emph{the same safety-bias $\fimp=0.75$  across all tasks} (see also \cref{sec:ablation} for an ablation study).

\changed[dw]{We consider two variants of SB-TRPO that differ in advantage estimation. The first uses purely Monte Carlo estimates, eliminating the need for learned critics and reducing computational cost (at the expense of higher variance). The second employs Generalized Advantage Estimation (GAE) with a learned critic, as is standard in reference implementations (such as SafePO) of baselines due to its favourable bias-variance trade-off. We compare these variants in \cref{sec:ablation} and use the Monte Carlo variant for SB-TRPO in the main result \cref{tab:main-results}.
} 

\paragraph{Training Details.}
We train the policies for $2 \times 10^7$ time steps ($10^3$ epochs), running $2 \times 10^4$ steps per epoch with $20$ vectorised environments per seed. 
All results are reported as averages over 5 seeded parameter initialisations, with standard deviations indicating variability across seeds. 
Training is performed on a server with an NVIDIA H200 GPU. 

\paragraph{Metrics.}
We compare our performance with the baselines using the standard metrics of \emph{rewards} and \emph{costs}, 
averaged over the past 50 episodes during training. 
End-of-training values provide a quantitative comparison, while full training curves reveal the asymptotic behaviour of reward and cost as training approaches the budgeted limit. 

To capture safety and task performance under hard constraints (as formalised in \pref{eq:hard}), we introduce two additional metrics: 
\emph{safety probability} is the fraction of episodes completed without any safety violation, i.e.\ with zero cost;
\emph{safe reward} is the average return over all episodes, with returns from episodes with any safety violation set to $0$ (see \cref{eq:defmetric} in \cref{app:metric}).

\subsection{Results}
\label{sec:res}

\begin{table*}[t!]
    \centering
    \caption{Final metrics across eight level-2 \emph{Safety Gymnasium} tasks. Red: negative (safe) reward, italics: safety probability $< 50\%$, grey: Pareto-dominated pairs, bold: best value per metric (only among methods with positive (safe) reward and safety probability $\geq 50\%$ for the other metric).}
    \label{tab:resall}
    \resizebox{\textwidth}{!}{%
    \begin{tabular}{l*{8}{cc}}

\toprule
\multirow{2}{*}{Method} & \multicolumn{2}{c}{Point Push} & \multicolumn{2}{c}{Point Button} & \multicolumn{2}{c}{Point Goal} & \multicolumn{2}{c}{Car Push} & \multicolumn{2}{c}{Car Circle} & \multicolumn{2}{c}{Car Goal} & \multicolumn{2}{c}{Hopper Velocity} & \multicolumn{2}{c}{Swimmer Velocity} \\
\cmidrule(lr){2-17}
 & Safe Rew.$^{\uparrow}$ & Safe Prob.$^{\uparrow}$ 
 & Safe Rew.$^{\uparrow}$ & Safe Prob.$^{\uparrow}$ 
 & Safe Rew.$^{\uparrow}$ & Safe Prob.$^{\uparrow}$ 
 & Safe Rew.$^{\uparrow}$ & Safe Prob.$^{\uparrow}$ 
 & Safe Rew.$^{\uparrow}$ & Safe Prob.$^{\uparrow}$ 
 & Safe Rew.$^{\uparrow}$ & Safe Prob.$^{\uparrow}$ 
 & Safe Rew.$^{\uparrow}$ & Safe Prob.$^{\uparrow}$ 
 & Safe Rew.$^{\uparrow}$ & Safe Prob.$^{\uparrow}$ \\
\midrule
Ours & $\boldsymbol{0.33\pm0.18}$ & $\boldsymbol{0.79\pm0.08}$ & $\mathit{0.9\pm0.4}$ & $\boldsymbol{\mathit{0.47\pm0.1}}$ & $\boldsymbol{1.6\pm0.5}$ & $0.68\pm0.07$ & $\boldsymbol{0.38\pm0.12}$ & $0.71\pm0.07$ & $7.5\pm1.5$ & $\boldsymbol{0.99\pm0.01}$ & $\boldsymbol{1.4\pm0.4}$ & $0.66\pm0.08$ & \color{dom}$860\pm378$ & \color{dom}$\boldsymbol{1.0\pm0.0}$ & $48\pm26$ & $\boldsymbol{0.98\pm0.02}$ \\
PPO-Lag & \color{negrew}$-0.29\pm0.8$ & \color{negrew}$0.59\pm0.22$ & \color{dom}$\mathit{0.58\pm0.43}$ & \color{dom}$\mathit{0.18\pm0.11}$ & \color{dom}$\mathit{1.3\pm1.1}$ & \color{dom}$\mathit{0.47\pm0.13}$ & \color{negrew}$-0.13\pm0.34$ & \color{negrew}$0.65\pm0.1$ & \color{dom}$7.5\pm2.4$ & \color{dom}$0.86\pm0.17$ & \color{dom}$\mathit{1.2\pm1.2}$ & \color{dom}$\mathit{0.45\pm0.16}$ & \color{dom}$510\pm500$ & \color{dom}$0.58\pm0.44$ & \color{dom}$18\pm38$ & \color{dom}$0.57\pm0.42$ \\
TRPO-Lag & \color{negrew}$-0.49\pm0.45$ & \color{negrew}$0.78\pm0.16$ & \color{negrew}$-0.44\pm0.49$ & \color{negrew}$0.62\pm0.1$ & $0.09\pm0.29$ & $\boldsymbol{0.79\pm0.06}$ & \color{negrew}$-0.19\pm0.25$ & \color{negrew}$0.89\pm0.04$ & \color{dom}$\boldsymbol{11\pm4}$ & \color{dom}$0.81\pm0.26$ & $0.23\pm0.38$ & $0.75\pm0.07$ & \color{dom}$72\pm90$ & \color{dom}$0.6\pm0.5$ & \color{dom}$\mathit{1.3\pm5.1}$ & \color{dom}$\mathit{0.33\pm0.42}$ \\
CPO & \color{negrew}$-1.2\pm0.9$ & \color{negrew}$0.77\pm0.19$ & \color{negrew}$-2.4\pm0.9$ & \color{negrew}$0.76\pm0.08$ & \color{negrew}$-1.3\pm0.4$ & \color{negrew}$0.86\pm0.05$ & \color{negrew}$-1.3\pm0.3$ & \color{negrew}$0.93\pm0.05$ & \color{dom}$0.96\pm0.76$ & \color{dom}$\boldsymbol{0.99\pm0.03}$ & \color{negrew}$-0.73\pm0.23$ & \color{negrew}$0.82\pm0.09$ & \color{dom}$880\pm182$ & \color{dom}$\boldsymbol{1.0\pm0.0}$ & \color{negrew}$-3.8\pm9.3$ & \color{negrew}$0.99\pm0.03$ \\
CUP & \color{dom}$0.12\pm0.19$ & \color{dom}$0.67\pm0.19$ & \color{dom}$\mathit{0.08\pm0.13}$ & \color{dom}$\mathit{0.033\pm0.043}$ & \color{dom}$\mathit{0.95\pm0.74}$ & \color{dom}$\mathit{0.15\pm0.09}$ & \color{negrew}$\mathit{-0.0062\pm0.1729}$ & \color{negrew}$\mathit{0.47\pm0.19}$ & \color{dom}$7.1\pm3.5$ & \color{dom}$0.75\pm0.28$ & \color{dom}$\mathit{0.93\pm0.74}$ & \color{dom}$\mathit{0.15\pm0.1}$ & \color{dom}$1000\pm289$ & \color{dom}$0.72\pm0.19$ & \color{dom}$34\pm52$ & \color{dom}$0.9\pm0.1$ \\
CPPO-PID & \color{negrew}$-2.7\pm2.7$ & \color{negrew}$0.79\pm0.11$ & \color{negrew}$-1.1\pm0.5$ & \color{negrew}$0.59\pm0.06$ & \color{negrew}$-1.8\pm0.7$ & \color{negrew}$0.81\pm0.07$ & \color{negrew}$-0.71\pm0.58$ & \color{negrew}$0.72\pm0.08$ & \color{dom}$0.96\pm1.85$ & \color{dom}$0.91\pm0.12$ & \color{negrew}$-0.59\pm0.5$ & \color{negrew}$0.69\pm0.08$ & \color{dom}$860\pm263$ & \color{dom}$0.84\pm0.16$ & \color{negrew}$-0.33\pm9.93$ & \color{negrew}$0.95\pm0.03$ \\
FOCOPS & \color{negrew}$-0.2\pm0.5$ & \color{negrew}$0.66\pm0.3$ & $\mathit{1.3\pm0.5}$ & $\mathit{0.15\pm0.03}$ & $\mathit{2.2\pm1.2}$ & $\mathit{0.34\pm0.05}$ & \color{dom}$0.05\pm0.251$ & \color{dom}$0.72\pm0.08$ & $8.1\pm2.0$ & $0.96\pm0.09$ & $\mathit{1.6\pm2.3}$ & $\mathit{0.27\pm0.3}$ & $970\pm307$ & $\boldsymbol{1.0\pm0.0}$ & \color{dom}$15\pm10$ & \color{dom}$0.88\pm0.09$ \\
RCPO & \color{negrew}$-0.48\pm0.45$ & \color{negrew}$0.84\pm0.08$ & \color{negrew}$-0.49\pm0.54$ & \color{negrew}$0.67\pm0.06$ & \color{negrew}$-0.012\pm0.366$ & \color{negrew}$0.85\pm0.06$ & \color{negrew}$-0.48\pm0.4$ & \color{negrew}$0.81\pm0.07$ & $10\pm3$ & $0.9\pm0.18$ & $0.21\pm0.4$ & $\boldsymbol{0.77\pm0.06}$ & \color{dom}$420\pm364$ & \color{dom}$\boldsymbol{1.0\pm0.0}$ & \color{dom}$7.0\pm12.3$ & \color{dom}$0.67\pm0.25$ \\
PCPO & \color{negrew}$-1.1\pm0.7$ & \color{negrew}$0.79\pm0.11$ & \color{negrew}$-1.6\pm0.6$ & \color{negrew}$0.76\pm0.06$ & \color{negrew}$-1.3\pm0.6$ & \color{negrew}$0.81\pm0.08$ & \color{negrew}$-0.48\pm0.38$ & \color{negrew}$0.82\pm0.06$ & $\boldsymbol{11\pm2}$ & $0.89\pm0.17$ & \color{negrew}$-0.75\pm0.26$ & \color{negrew}$0.77\pm0.08$ & \color{dom}$460\pm501$ & \color{dom}$0.72\pm0.36$ & \color{dom}$\mathit{-0.0\pm238.1}$ & \color{dom}$\mathit{0.0\pm1.0}$ \\
P3O & \color{dom}$0.23\pm0.12$ & \color{dom}$0.78\pm0.1$ & \color{negrew}$\boldsymbol{-0.06\pm0.21}$ & \color{negrew}$0.6\pm0.06$ & \color{negrew}$-0.024\pm0.521$ & \color{negrew}$0.64\pm0.06$ & $0.29\pm0.13$ & $0.76\pm0.06$ & \color{dom}$7.1\pm2.4$ & \color{dom}$0.94\pm0.07$ & \color{dom}$0.36\pm0.29$ & \color{dom}$0.52\pm0.11$ & \color{dom}$\mathit{0.0\pm1454.5}$ & \color{dom}$\mathit{0.0\pm1.0}$ & $\boldsymbol{94\pm51}$ & $0.88\pm0.08$ \\
C-TRPO & \color{negrew}$-1.4\pm1.2$ & \color{negrew}$0.83\pm0.09$ & \color{negrew}$-1.8\pm0.2$ & \color{negrew}$0.76\pm0.06$ & \color{negrew}$-2.2\pm1.1$ & \color{negrew}$0.88\pm0.06$ & \color{negrew}$-1.2\pm0.5$ & \color{negrew}$0.88\pm0.05$ & \color{dom}$0.54\pm0.5$ & \color{dom}$0.97\pm0.05$ & \color{negrew}$-1.9\pm0.5$ & \color{negrew}$0.89\pm0.07$ & \color{dom}$17\pm22$ & \color{dom}$0.82\pm0.25$ & \color{negrew}$-18\pm8$ & \color{negrew}$0.97\pm0.03$ \\
C3PO & \color{negrew}$-0.15\pm0.18$ & \color{negrew}$0.85\pm0.05$ & \color{negrew}$-0.3\pm0.1$ & \color{negrew}$0.76\pm0.07$ & \color{negrew}$-0.04\pm0.634$ & \color{negrew}$0.86\pm0.06$ & $0.066\pm0.145$ & $\boldsymbol{0.86\pm0.05}$ & \color{dom}$5.1\pm1.7$ & \color{dom}$\boldsymbol{0.99\pm0.01}$ & \color{negrew}$-0.25\pm0.39$ & \color{negrew}$0.86\pm0.06$ & $\boldsymbol{1300\pm225}$ & $0.79\pm0.14$ & $78\pm56$ & $0.92\pm0.08$ \\\bottomrule

\\\\
\toprule
        \multirow{2}{*}{Method} & \multicolumn{2}{c}{Point Push} & \multicolumn{2}{c}{Point Button} & \multicolumn{2}{c}{Point Goal} & \multicolumn{2}{c}{Car Push} & \multicolumn{2}{c}{Car Circle} & \multicolumn{2}{c}{Car Goal} & \multicolumn{2}{c}{Hopper Velocity} & \multicolumn{2}{c}{Swimmer Velocity}\\
        \cmidrule(lr){2-17}
         & Rewards$^{\uparrow}$ & Costs$^{\downarrow}$ 
& Rewards$^{\uparrow}$ & Costs$^{\downarrow}$ 
& Rewards$^{\uparrow}$ & Costs$^{\downarrow}$ 
& Rewards$^{\uparrow}$ & Costs$^{\downarrow}$ 
& Rewards$^{\uparrow}$ & Costs$^{\downarrow}$ 
& Rewards$^{\uparrow}$ & Costs$^{\downarrow}$ 
& Rewards$^{\uparrow}$ & Costs$^{\downarrow}$ 
& Rewards$^{\uparrow}$ & Costs$^{\downarrow}$ \\
        \midrule

Ours & $\boldsymbol{0.43\pm0.21}$ & $\boldsymbol{8.0\pm6.5}$ & $\mathit{1.9\pm0.7}$ & $\boldsymbol{\mathit{15\pm5}}$ & $\boldsymbol{2.4\pm0.6}$ & $18\pm6$ & $\boldsymbol{0.53\pm0.16}$ & $14\pm7$ & $7.5\pm1.5$ & $\boldsymbol{0.28\pm0.66}$ & $\boldsymbol{2.1\pm0.6}$ & $18\pm6$ & \color{dom}$860\pm377$ & \color{dom}$0.00028\pm0.00208$ & \color{dom}$49\pm27$ & \color{dom}$0.87\pm3.24$ \\
PPO-Lag & \color{negrew}$-0.47\pm1.31$ & \color{negrew}$27\pm21$ & $\mathit{3.2\pm1.2}$ & $\mathit{76\pm27}$ & $\mathit{2.8\pm2.2}$ & $\mathit{42\pm18}$ & \color{negrew}$-0.2\pm0.52$ & \color{negrew}$34\pm20$ & \color{dom}$8.7\pm2.0$ & \color{dom}$22\pm37$ & $\mathit{2.6\pm2.5}$ & $\mathit{54\pm29}$ & \color{dom}$890\pm507$ & \color{dom}$2.9\pm2.8$ & \color{dom}$40\pm60$ & \color{dom}$6.5\pm7.2$ \\
TRPO-Lag & \color{negrew}$-0.65\pm0.55$ & \color{negrew}$9.3\pm14.5$ & \color{negrew}$-0.72\pm0.79$ & \color{negrew}$15\pm6$ & $0.11\pm0.35$ & $\boldsymbol{17\pm9}$ & \color{negrew}$-0.21\pm0.27$ & \color{negrew}$6.3\pm4.6$ & $\boldsymbol{13\pm1}$ & $18\pm33$ & $0.29\pm0.5$ & $\boldsymbol{16\pm7}$ & \color{dom}$160\pm77$ & \color{dom}$2.1\pm2.6$ & \color{dom}$\mathit{21\pm16}$ & \color{dom}$\mathit{6.3\pm6.0}$ \\
CPO & \color{negrew}$-1.6\pm1.1$ & \color{negrew}$5.8\pm9.6$ & \color{negrew}$-3.2\pm1.2$ & \color{negrew}$5.8\pm2.8$ & \color{negrew}$-1.5\pm0.5$ & \color{negrew}$9.3\pm8.0$ & \color{negrew}$-1.4\pm0.3$ & \color{negrew}$4.2\pm4.4$ & \color{dom}$0.96\pm0.75$ & \color{dom}$0.98\pm6.63$ & \color{negrew}$-0.89\pm0.27$ & \color{negrew}$13\pm8$ & $880\pm185$ & $\boldsymbol{0.0\pm0.0}$ & \color{negrew}$-3.8\pm9.4$ & \color{negrew}$0.016\pm0.037$ \\
CUP & \color{dom}$0.18\pm0.26$ & \color{dom}$25\pm22$ & \color{dom}$\mathit{5.2\pm1.1}$ & \color{dom}$\mathit{120\pm19}$ & $\mathit{7.2\pm2.8}$ & $\mathit{110\pm28}$ & \color{negrew}$\mathit{-0.026\pm0.362}$ & \color{negrew}$\mathit{85\pm50}$ & \color{dom}$9.6\pm2.5$ & \color{dom}$48\pm86$ & $\mathit{6.1\pm2.8}$ & $\mathit{110\pm31}$ & $1400\pm114$ & $0.43\pm0.35$ & \color{dom}$38\pm58$ & \color{dom}$0.14\pm0.18$ \\
CPPO-PID & \color{negrew}$-3.4\pm3.3$ & \color{negrew}$13\pm17$ & \color{negrew}$-1.8\pm0.8$ & \color{negrew}$24\pm9$ & \color{negrew}$-2.3\pm0.9$ & \color{negrew}$17\pm10$ & \color{negrew}$-0.99\pm0.78$ & \color{negrew}$34\pm28$ & \color{dom}$1.3\pm1.8$ & \color{dom}$27\pm55$ & \color{negrew}$-0.79\pm0.69$ & \color{negrew}$23\pm10$ & $1000\pm228$ & $0.081\pm0.107$ & $0.71\pm10.89$ & $\boldsymbol{0.044\pm0.081}$ \\
FOCOPS & \color{negrew}$-0.29\pm0.76$ & \color{negrew}$35\pm61$ & $\mathit{9.9\pm2.0}$ & $\mathit{88\pm14}$ & $\mathit{6.5\pm3.3}$ & $\mathit{88\pm37}$ & \color{dom}$0.069\pm0.338$ & \color{dom}$35\pm24$ & $8.4\pm1.9$ & $3.9\pm18.2$ & $\mathit{6.0\pm5.3}$ & $\mathit{86\pm49}$ & \color{dom}$970\pm309$ & \color{dom}$0.093\pm0.135$ & \color{dom}$20\pm12$ & \color{dom}$3.0\pm5.7$ \\
RCPO & \color{negrew}$-0.57\pm0.51$ & \color{negrew}$6.3\pm8.8$ & \color{negrew}$-0.72\pm0.79$ & \color{negrew}$15\pm6$ & \color{negrew}$-0.00026\pm0.41018$ & \color{negrew}$12\pm7$ & \color{negrew}$-0.6\pm0.49$ & \color{negrew}$19\pm20$ & \color{dom}$11\pm2$ & \color{dom}$13\pm35$ & $0.29\pm0.5$ & $\boldsymbol{16\pm7}$ & \color{dom}$470\pm361$ & \color{dom}$0.97\pm1.93$ & \color{dom}$19\pm18$ & \color{dom}$5.1\pm6.7$ \\
PCPO & \color{negrew}$-1.4\pm0.8$ & \color{negrew}$14\pm23$ & \color{negrew}$-2.2\pm0.8$ & \color{negrew}$8.8\pm4.7$ & \color{negrew}$-1.6\pm0.7$ & \color{negrew}$18\pm13$ & \color{negrew}$-0.58\pm0.46$ & \color{negrew}$17\pm18$ & $12\pm1$ & $4.4\pm8.5$ & \color{negrew}$-0.95\pm0.3$ & \color{negrew}$17\pm9$ & \color{dom}$750\pm589$ & \color{dom}$3.9\pm4.9$ & \color{negrew}$\mathit{-18\pm2}$ & \color{negrew}$\mathit{70\pm16}$ \\
P3O & \color{dom}$0.3\pm0.1$ & \color{dom}$15\pm11$ & \color{negrew}$\boldsymbol{-0.0073\pm0.267}$ & \color{negrew}$23\pm8$ & \color{dom}$0.05\pm0.77$ & \color{dom}$36\pm18$ & \color{dom}$0.38\pm0.15$ & \color{dom}$28\pm12$ & \color{dom}$7.6\pm2.3$ & \color{dom}$11\pm17$ & \color{dom}$0.73\pm0.5$ & \color{dom}$40\pm17$ & $\mathit{1700\pm34}$ & $\mathit{8.9\pm11.7}$ & $\boldsymbol{110\pm58}$ & $0.22\pm0.16$ \\
C-TRPO & \color{negrew}$-1.7\pm1.4$ & \color{negrew}$7.8\pm12.2$ & \color{negrew}$-2.3\pm0.2$ & \color{negrew}$6.0\pm3.0$ & \color{negrew}$-2.5\pm1.3$ & \color{negrew}$8.7\pm5.9$ & \color{negrew}$-1.3\pm0.5$ & \color{negrew}$8.9\pm11.5$ & \color{dom}$0.56\pm0.48$ & \color{dom}$3.9\pm7.4$ & \color{negrew}$-2.1\pm0.6$ & \color{negrew}$8.5\pm6.7$ & \color{dom}$6.9\pm3.5$ & \color{dom}$\boldsymbol{0.0\pm0.0}$ & \color{negrew}$-18\pm8$ & \color{negrew}$0.058\pm0.18$ \\
C3PO & \color{negrew}$-0.18\pm0.21$ & \color{negrew}$6.3\pm5.4$ & \color{negrew}$-0.39\pm0.16$ & \color{negrew}$7.8\pm3.5$ & \color{negrew}$-0.056\pm0.72$ & \color{negrew}$7.3\pm5.4$ & $0.078\pm0.16$ & $\boldsymbol{12\pm6}$ & \color{dom}$5.1\pm1.7$ & \color{dom}$0.53\pm1.0$ & \color{negrew}$-0.28\pm0.44$ & \color{negrew}$12\pm10$ & $\boldsymbol{1600\pm45}$ & $0.48\pm0.41$ & $85\pm60$ & $0.1\pm0.13$ \\ 
\bottomrule
    \end{tabular}
    \label{tab:main-results}
    }
\end{table*}

The results are reported in \cref{tab:main-results}.
Our method consistently achieves the best performance in safe reward (among methods with at least 50\% safety probability) or safety probability (among methods with positive safe reward), demonstrating its ability to attain both high safety and meaningful task performance. When another method attains a higher safe reward, it typically comes at the expense of a substantial reduction in safety; conversely, methods with higher safety probabilities generally achieve significantly lower safe reward. 
 These trends also manifest for raw rewards and costs.

On the other hand, PPO-Lagrangian often collapses to poor reward, poor safety, or both. 
Baselines such as TRPO-Lagrangian, CPO or C-TRPO can achieve better safety on some harder tasks (e.g.\ Point Button), but their rewards are very low -- mostly negative -- yielding minimal task completion and still falling far short of almost-sure safety.  
CUP and FOCOPS generally exhibit lower safety.  

Moreover, by eschewing critics, we beat baselines by at least a factor of 10 in computational cost per epoch (see \cref{tab:time} in \cref{app:res}).

In summary, our approach is the only one to consistently achieve the \emph{best practical balance of safety and meaningful task completion}.

\subsubsection{Alignment of Policy Updates with Gradients}
\label{sec:align}
\begin{figure}[t]
    \centering
    \hfill
    \begin{subfigure}{0.35\linewidth}
    \includegraphics[width=\linewidth]{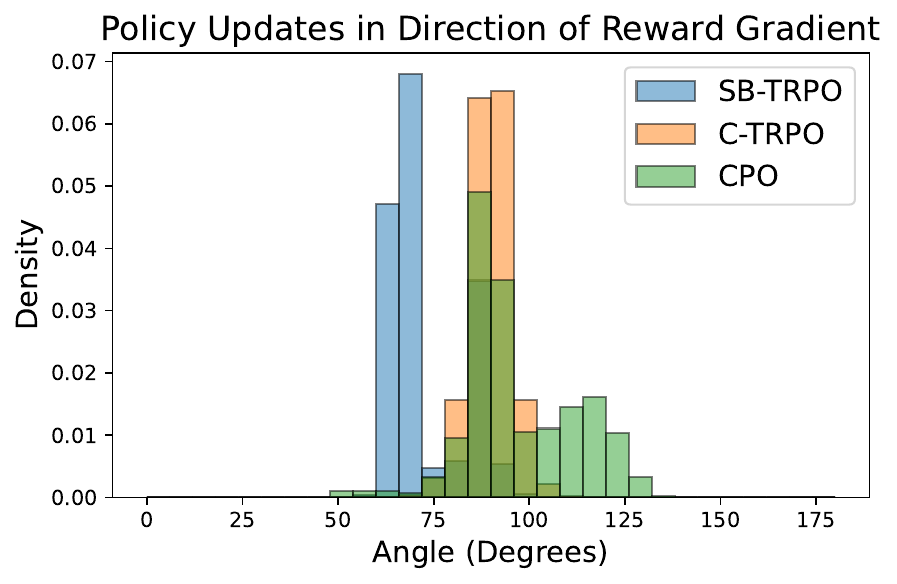}    
    \caption{Car Circle}
    \end{subfigure}
    \hfill
    \begin{subfigure}{0.35\linewidth}
        \includegraphics[width=\linewidth]{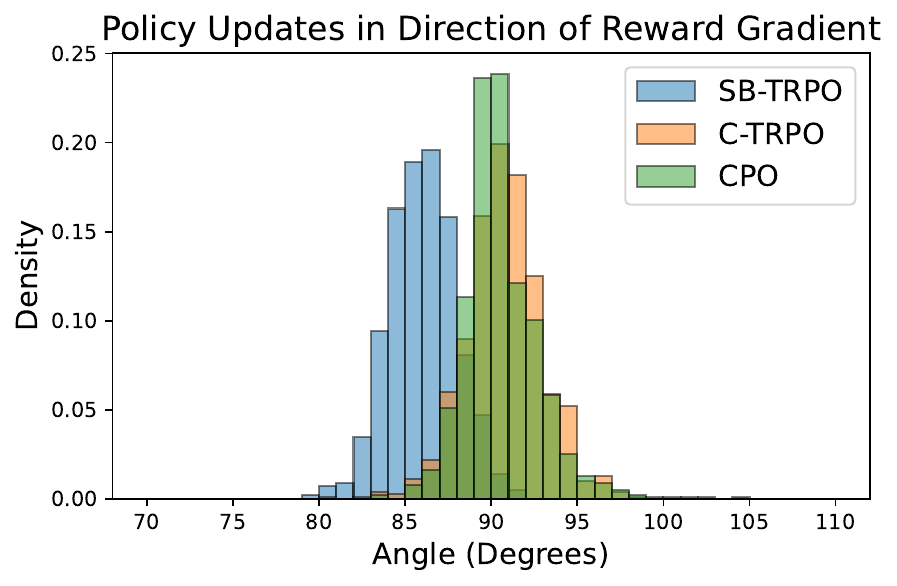}
        \caption{Point Button}
    \end{subfigure}    
    \hfill{}
    \caption{Angles between policy updates and reward gradients}
    \label{fig:histr}    
\end{figure}

To further understand why SB-TRPO avoids the over-conservatism of similar methods, we analyse the angles between policy updates and reward/cost gradients (\cref{fig:histr,fig:histc}).  
CPO and C-TRPO updates are largely at best orthogonal to the reward gradient (angles around or above $90^\circ$), indicating that these methods spend much of their time in the recovery phase. In contrast, SB-TRPO updates lie below $90^\circ$, mostly around $60^\circ$–$70^\circ$ for Car Circle, confirming the performance improvement guarantee of \cref{thm:perf} empirically.  
At the same time, angles relative to the cost gradient can be larger for SB-TRPO than for CPO/C-TRPO but well below $75^\circ$, reflecting a measured approach to feasibility that balances safety with meaningful task progress.

\changed[dw]{
\subsubsection{Impact of Cost Limit on Baseline Safety}
\label{sec:clim}

We study the performance of baselines under a positive cost threshold of 15 (see \cref{tab:results-15}). This ad-hoc relaxation departs from the target zero-cost safety objective \pref{eq:relaxed} by \emph{explicitly allowing unsafe events}. On easier tasks such as the Velocity and Car Circle environments, the baselines exploit the higher cost allowance, becoming significantly less safe in line with the threshold, without substantially improving rewards. On harder tasks, rewards increase modestly at a clear cost to safety.

These results highlight the sensitivity to the choice of cost limit: treating it as a tunable, task-dependent hyperparameter can lead to significantly reduced safety without commensurate reward gains. By contrast, \alg’s principled update provides more reliable performance, maintaining a robust balance of safety and task completion across environments without relying on such tuning.
}

\dw{include RCPO and C3PO?}

\subsection{Ablation Studies}
\label{sec:ablation}

\paragraph{Safety Bias.} 
We evaluate the effect of the safety bias $\fimp$ over $\fimp \in [0.6, 0.9]$ on Point Button and Car Goal. 
\cref{fig:fimp_scatter} in \cref{app:fimp} shows that varying $\fimp$ shifts \alg policies 
along a nearly linear reward-safety probability Pareto frontier: 
higher values emphasise safety at the expense of reward, while lower values 
trade off some safety for greater reward. 
The baselines typically underperform relative to this frontier (or achieve negative reward).
This demonstrates that \alg is not overly sensitive to the choice of $\fimp$, with $\fimp= 0.75$ achieving excellent results throughout (see \cref{tab:main-results}). 

\paragraph{Advantage Estimation.}

\changed[dw]{
 We ablate the effect of advantage estimation (see \cref{sec:setup}) in \cref{tab:critic} (\cref{app:critic}). At a fixed safety bias $\fimp=0.75$, the Monte Carlo (MC) variant of SB-TRPO attains both higher safety and lower reward than its GAE counterpart. Increasing the safety bias for GAE ($\fimp=0.8$) yields comparable safety and reward levels, indicating that the two variants achieve similar safety–reward trade-offs after a simple adjustment of the safety bias.
From a practical perspective, the MC variant obviates critic training, reducing computational cost by more than an order of magnitude (see \cref{tab:time}), motivating its use in our main experiments.

}

\section{Discussion}
\label{sec:discussion}
\dw{refine and sharpen}
Our empirical results show that framing policy updates as \pref{eq:approx}---dynamically balancing progress towards safety and task performance---prevents collapse into trivially safe but task-ineffective regions in the face of strict safety constraints.  
Unlike CPO and related methods such as C-TRPO, which enforce feasibility at each step and switch to purely cost-driven recovery when constraints are violated with \emph{no guarantees for reward improvement}, SB-TRPO has \emph{no separate recovery phase}. This results in \emph{smoother learning dynamics} and meaningful task progress throughout learning.
Specifically, SB-TRPO’s updates align well with both reward and cost gradients, whereas CPO-style updates are often at best orthogonal to the reward gradient. Consistent with \cref{thm:perf}, \emph{steady improvement in both safety and task performance} is observed empirically, rather than alternating phases of progress and feasibility recovery.

Lagrangian methods exhibit complementary limitations in hard-constraint regimes: under zero-cost thresholds, Lagrange multipliers grow monotonically and cannot decrease, making performance highly sensitive to initialisation: small values risk unsafe behaviour, while large values induce persistent over-conservatism.

\paragraph{Conclusion.} In summary, SB-TRPO’s principled trust-region update enables \emph{steady reward accumulation} and \emph{high safety}, consistently outperforming state-of-the-art baselines across \emph{Safety Gymnasium} tasks in balancing task performance with high safety.

\paragraph{Limitations.}
\alg targets hard constraints: it is not directly applicable to CMDPs with positive cost thresholds.
As with other policy optimisation methods \citep{TRPO,guetalaaai2024}, our performance guarantee (\cref{thm:perf}) assumes exact gradients and holds only approximately with estimates.  Moreover, while our method achieves strong results in \emph{Safety Gymnasium}, it does not guarantee almost-sure safety on the most challenging tasks.

\bibliographystyle{plain}
\bibliography{references}

\appendix

\newpage

\section{Motivation of Zero-Cost CMDP Formulation}
In many safety-critical domains, the appropriate requirement is that the agent \emph{never} enters unsafe states (e.g.\ when they encode damage to an expensive robot or even accidents of autonomous cars).  In the discounted-cost CMDP formalism with non-negative costs, this requirement is expressed \emph{exactly} by imposing the zero-threshold constraint \(J_c(\pi)=0\). Introducing a positive threshold \(b>0\) effectively permits some violations and, crucially, adds a hyperparameter that encodes an arbitrary notion of ``acceptable risk’’. 

The level of violation required for learning to progress varies greatly across tasks (e.g.\ around 17 on the hardest navigation environments, around 10 for PointPush, and well below~1 for velocity or circle tasks) as well as across algorithms, neural network architectures and optimisation hyperparameters. This makes positive-threshold CMDPs brittle, environment-dependent, and potentially misaligned with true safety objectives.
Conceptually, a positive cost threshold in the context of hard safety is undesirable because it conflates the \emph{problem statement/specification} with an \emph{algorithmic hyper-parameter}.

For these reasons, we focus on the \emph{intrinsically meaningful} zero-threshold (hard-constraint) CMDP, which avoids the ambiguity of selecting \(b\), removes the tuning burden associated with calibrating allowable violations, and directly matches the requirement of ``almost surely no unsafe events’’. 

Finally, we believe that the zero-cost problem is significantly under-explored in model-free Safe RL, despite being the more appropriate formulation for capturing critical safety violations in practice. By explicitly targeting this regime, our work aims to help shift attention towards this practically important but comparatively understudied problem class.

\section{Supplementary Materials for \cref{sec:method}}
\label{app:method}
\subsection{Supplementary Materials for \cref{sec:ideal}}
\label{app:ideal}

\begin{lemma}
\label{lem:up}
    $\mathcal L_{r,\po}(\pi) 
    - C_{r,\po}\cdot D_{\mathrm{KL}}^{\max}(\po\parallel\pi)\leq J_r(\pi)\leq \mathcal L_{r,\po}(\pi) 
    + C_{r,\po}\cdot D_{\mathrm{KL}}^{\max}(\po\parallel\pi)$.
\end{lemma}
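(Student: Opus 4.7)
The statement is simply the unpacked (two-sided) form of the Policy Improvement Bound that has already been quoted in the main text, namely
\[
\left|\mathcal L_{r,\po}(\pi)-J_r(\pi)\right|\le C_{r,\po}\cdot D_{\mathrm{KL}}^{\max}(\po\parallel\pi).
\]
So the plan is to reduce \cref{lem:up} to this bound and then split the absolute value into its two defining inequalities. Concretely, I would invoke the bound (citing \citep{TRPO} or recalling its derivation), observe that $|x|\le y$ is equivalent to $-y\le x\le y$, and apply this to $x=\mathcal L_{r,\po}(\pi)-J_r(\pi)$ with $y=C_{r,\po}\cdot D_{\mathrm{KL}}^{\max}(\po\parallel\pi)$, yielding the two inequalities of the lemma after rearranging.

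If a self-contained derivation is desired rather than a direct appeal to the Policy Improvement Bound, I would proceed in three standard steps. First, apply the performance difference lemma to write $J_r(\pi)-J_r(\po)$ as an expectation of $A_r^{\po}$ under the discounted state–action occupancy of $\pi$. Second, replace the occupancy of $\pi$ by that of $\po$ together with the importance weight $\pi(a\mid s)/\po(a\mid s)$; this substitution is exactly what produces $\mathcal L_{r,\po}(\pi)-J_r(\po)$. Third, bound the mismatch between the two occupancies by the total variation distance between $\pi$ and $\po$ at visited states, and convert TV to KL via Pinsker's inequality, which yields a constant $C_{r,\po}$ depending on $\gamma$ and $\max_{s,a}|A_r^{\po}(s,a)|$.

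The main obstacle, if one carries out the self-contained version, is the bookkeeping in the third step: carefully tracking how errors in the state-visitation distribution propagate through the infinite-horizon discounted sum, and arriving at a clean multiplicative constant times the maximum-over-states KL. This is precisely the calculation done in \citep{TRPO}, so the cleanest route is simply to quote their bound and unpack the absolute value; the lemma then follows immediately.
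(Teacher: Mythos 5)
Your primary route has a circularity problem in the context of this paper: the two-sided bound $\left|\mathcal L_{r,\po}(\pi)-J_r(\pi)\right|\le C_{r,\po}\cdot D_{\mathrm{KL}}^{\max}(\po\parallel\pi)$ that you propose to "quote and unpack" is exactly the statement that \cref{lem:up} exists to justify. What \citep[Thm.~1]{TRPO} literally provides is only the \emph{one-sided} lower bound $J_r(\pi)\ge \mathcal L_{r,\po}(\pi)-C_{r,\po}\cdot D_{\mathrm{KL}}^{\max}(\po\parallel\pi)$; the absolute-value form in the main text is a consequence of this lemma, not a citable premise for it. So "split $|x|\le y$ into $-y\le x\le y$" does not get off the ground unless you first establish the upper bound by some other means.

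The paper closes that gap with a short symmetry argument: apply the one-sided TRPO bound to the negated reward $-r$, observe that $J_{-r}=-J_r$, $A_{-r}^{\po}=-A_r^{\po}$, hence $\mathcal L_{-r,\po}=-\mathcal L_{r,\po}$ and, because the constant depends only on $\max_{s,a}|A_r^{\po}(s,a)|$, $C_{-r,\po}=C_{r,\po}$; negating the resulting inequality yields the upper bound with the \emph{same} constant. Your self-contained fallback (performance-difference lemma, occupancy swap, TV-to-KL via Pinsker) would also work, since the occupancy-mismatch error in the TRPO derivation is intrinsically two-sided, but it re-proves the whole machinery rather than reusing the stated theorem. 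The negation trick is the cheaper and cleaner route, and the key observation it requires — invariance of $C_{r,\po}$ under $r\mapsto -r$ — is the one step your proposal does not supply.
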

\begin{proof}
     By \citep[Thm.~1]{TRPO},
    \begin{align*}
        \mathcal L_{r,\po}(\pi) 
    - C_{r,\po}\cdot D_{\mathrm{KL}}^{\max}(\po\parallel\pi)&\leq J_r(\pi),&
    C_{r,\po}&\defeq\frac{4\cdot\gamma\cdot\max_{s,a}|A_r^{\po}(s,a)|}{(1-\gamma)^2}
    \end{align*}
    Consider the reward function $-r$. Note that $J_{-r}(\pi)=-J_{r}(\pi)$ and $A_{-r}^{\po}(s,a)=-A_r^{\po}(s,a)$. Thus, $C_{-r,\po}=C_{r,\po}$ and $\mathcal L_{-r,\po}(\pi)=-\mathcal L_{r,\po}(\pi)$. Finally, again by \citep[Thm.~1]{TRPO},
    \begin{align*}
        -J_r(\pi)=J_{-r}(\pi)&\geq\mathcal L_{-r,\po}(\pi) 
    - C_{-r,\po}\cdot D_{\mathrm{KL}}^{\max}(\po\parallel\pi)\\
    &=-\mathcal L_{r,\po}(\pi) 
    - C_{r,\po}\cdot D_{\mathrm{KL}}^{\max}(\po\parallel\pi)&&\qedhere
    \end{align*}
\end{proof}

\general*
\begin{proof}[Proof sketch]
Note that $J_c(\pi_{k+1})=J_c(\pi_k)$ implies $\epsilon=0$. The theorem follows directly by definition of \pref{eq:general}.
\end{proof}

\subsection{Supplementary Materials for \cref{sec:approx}}
\label{app:surr}
We recall the Fisher information matrix:
\begin{align*}
F \defeq \mathbb{E}_{s,a \sim \pi_{\parao}} 
\left[ \nabla_\theta \log \pi_\theta(a|s)\big|_{\theta=\parao}\cdot \nabla_\theta \log \pi_\theta(a|s)^\top\big|_{\theta=\parao} \right]
\end{align*}

\begin{lemma}
\label{lem:kl}
For $\del_r$ and $\del_c$ defined in \cref{eq:approxr,eq:approxc},
\begin{align*}
\frac 1 2\cdot\del^T\cdot F\cdot\del\leq\epskl
\end{align*}
where $\del= (1-\conv)\cdot \del_r + \conv\cdot \del_c$ for arbitrary $\conv\in[0,1]$
\end{lemma}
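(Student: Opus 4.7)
The plan is to exploit the fact that the Fisher information matrix $F$ is positive semi-definite, being a covariance (expectation of an outer product of the score function with itself). Consequently, the quadratic form $\delta \mapsto \tfrac{1}{2}\cdot \delta^\top\cdot F\cdot \delta$ is convex in $\delta$, and hence its sublevel set
\begin{align*}
    \mathcal{T}_{\epskl} \defeq \{\delta \in \mathbb{R}^d : \tfrac{1}{2}\cdot \delta^\top\cdot F\cdot \delta \le \epskl\}
\end{align*}
is a convex subset of $\mathbb{R}^d$ (geometrically, a possibly degenerate ellipsoid centred at the origin).

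The next step is to observe that by their defining optimisation problems \cref{eq:approxr} and \cref{eq:approxc}, both $\del_r$ and $\del_c$ satisfy the KL constraint, i.e.\ $\del_r, \del_c \in \mathcal{T}_{\epskl}$. Since $\conv \in [0,1]$, the vector $\del = (1-\conv)\cdot\del_r + \conv\cdot\del_c$ is a convex combination of two points of $\mathcal{T}_{\epskl}$, so by convexity $\del \in \mathcal{T}_{\epskl}$, which is exactly the desired bound.

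Alternatively, and at a slightly lower level, one could expand
\begin{align*}
    \del^\top\cdot F\cdot \del
    = (1-\conv)^2\cdot \del_r^\top F \del_r
    + 2\conv(1-\conv)\cdot \del_r^\top F \del_c
    + \conv^2\cdot \del_c^\top F \del_c,
\end{align*}
apply the Cauchy--Schwarz inequality for the PSD bilinear form $\langle x,y\rangle_F\defeq x^\top F y$ to bound the cross term by $\sqrt{\del_r^\top F \del_r}\cdot\sqrt{\del_c^\top F \del_c}$, and conclude that $\del^\top F \del \le \bigl((1-\conv)\sqrt{\del_r^\top F \del_r} + \conv\sqrt{\del_c^\top F \del_c}\bigr)^2 \le 2\epskl$, since each of $\del_r^\top F \del_r$ and $\del_c^\top F \del_c$ is at most $2\epskl$.

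There is essentially no obstacle here: the statement is really just the observation that KL trust regions (under the quadratic Fisher approximation) are convex ellipsoids, and the only fact needed about $F$ beyond its definition is its positive semi-definiteness. The proof therefore fits in a few lines, with the convexity-based argument being the most concise and robust to degeneracy of $F$.
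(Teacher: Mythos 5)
Your proposal is correct and its main argument is essentially the paper's own proof: both rest on the positive semi-definiteness of $F$, the resulting convexity of $\del\mapsto\tfrac12\del^\top F\del$, and the fact that $\del_r$ and $\del_c$ each satisfy the trust-region constraint, so any convex combination does too. The additional Cauchy--Schwarz expansion is a valid but unnecessary elaboration of the same fact.
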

\begin{proof}
    Since $F$ is positive semi-definite, $\del\mapsto\del^T\cdot F\cdot\del$ is convex and the bound follows since it is satisfied by both $\del_r$ and $\del_c$.
\end{proof}

\begin{lemma}
\label{lem:conv}
    Assume $\dotp{g_c}{\del_c}\leq-\epsc$ and define $\del_\mu\defeq (1-\conv)\cdot \del_r + \conv\cdot \del_c$.
    Then
    \begin{align*}
        \max_{\conv} \dotp{g_r}{\del_\conv}\qquad\text{s.t.}\quad\dotp{g_c}{\del_\conv}\leq-\epsc,\quad\conv\in[0,1]
    \end{align*}
    is feasible and its optimal solution is $\conv$ defined in \cref{eq:conv}.
\end{lemma}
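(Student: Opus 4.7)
\textbf{Proof plan for \cref{lem:conv}.}
The plan is to exploit the fact that both the objective $\mu\mapsto\dotp{g_r}{\del_\mu}$ and the cost constraint $\mu\mapsto\dotp{g_c}{\del_\mu}$ are \emph{affine} in $\mu$, so the problem reduces to a one-dimensional linear program on the interval $[0,1]$. First I would write out
\begin{align*}
    \dotp{g_r}{\del_\mu}&=(1-\mu)\cdot\dotp{g_r}{\del_r}+\mu\cdot\dotp{g_r}{\del_c},\\
    \dotp{g_c}{\del_\mu}&=(1-\mu)\cdot\dotp{g_c}{\del_r}+\mu\cdot\dotp{g_c}{\del_c},
\end{align*}
and verify feasibility: at $\mu=1$ we have $\del_1=\del_c$, and $\dotp{g_c}{\del_c}\leq-\epsc$ by the hypothesis of the lemma, so $\mu=1$ is feasible.

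Next I would argue that the objective is \emph{non-increasing} in $\mu$, so the optimum is attained at the smallest feasible $\mu$. This uses \cref{lem:kl} (both $\del_r$ and $\del_c$ lie in the same KL trust region) together with the fact that $\del_r$ is by definition the maximiser of $\dotp{g_r}{\cdot}$ over that trust region (cf.~\cref{eq:approxr}); hence $\dotp{g_r}{\del_r}\geq\dotp{g_r}{\del_c}$, so the coefficient of $\mu$ in $\dotp{g_r}{\del_\mu}$ is non-positive.

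Having reduced the problem to finding the smallest $\mu\in[0,1]$ satisfying the linear inequality $(1-\mu)\cdot\dotp{g_c}{\del_r}+\mu\cdot\dotp{g_c}{\del_c}\leq-\epsc$, I would split into cases on the sign of $\dotp{g_c}{\del_r}+\epsc$. If $\dotp{g_c}{\del_r}\leq-\epsc$, then $\mu=0$ is already feasible and optimal, matching the ``otherwise'' branch of \cref{eq:conv}. If $\dotp{g_c}{\del_r}>-\epsc$, then since $\dotp{g_c}{\del_c}\leq-\epsc$ we necessarily have $\dotp{g_c}{\del_c}<\dotp{g_c}{\del_r}$, so $\dotp{g_c}{\del_r}-\dotp{g_c}{\del_c}>0$, and solving the inequality for $\mu$ gives exactly
\begin{align*}
    \mu=\frac{\dotp{g_c}{\del_r}+\epsc}{\dotp{g_c}{\del_r}-\dotp{g_c}{\del_c}},
\end{align*}
which matches the first branch of \cref{eq:conv}. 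A brief sanity check that $\mu\in[0,1]$ in this case ($\mu\geq0$ from the numerator, $\mu\leq1$ from $\dotp{g_c}{\del_c}\leq-\epsc$) completes the argument. I do not expect any real obstacle; the only subtlety is handling the degenerate case $\dotp{g_c}{\del_r}=\dotp{g_c}{\del_c}$, which is excluded from the first branch and falls under the ``otherwise'' branch (and then forces $\dotp{g_c}{\del_r}=\dotp{g_c}{\del_c}\leq-\epsc$, so $\mu=0$ is indeed feasible).
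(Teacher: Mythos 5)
Your proposal is correct and follows essentially the same route as the paper's proof: both use $\dotp{g_r}{\del_r}\geq\dotp{g_r}{\del_c}$ (from the optimality of $\del_r$ over the shared trust region) to reduce the problem to minimising $\mu$ subject to the affine cost constraint, then case-split on whether $\mu=0$ is already feasible and otherwise solve the linear inequality to obtain the ratio in \cref{eq:conv}. The only cosmetic difference is where you place the boundary case $\dotp{g_c}{\del_r}=-\epsc$, but both branches of \cref{eq:conv} yield $\mu=0$ there, so nothing is affected.
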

\reb{NB for our choice $\epsc\defeq-\fimp\cdot\dotp{g_c}{\del_c}\geq 0$, where $\fimp\in(0,1]$, the assumption clearly holds.}
\begin{proof}
By definition of $\del_r$ and $\del_c$,
\begin{align}
\label{eq:gdel}
    \dotp{g_r}{\del_r}&\geq\dotp{g_r}{\del_c}&
        \dotp{g_c}{\del_c}&\leq\dotp{g_c}{\del_r}
\end{align}

    Therefore, we can equivalently optimise
    \begin{align*}
        \min_{\conv} \conv\qquad\text{s.t.}\quad\dotp{g_c}{\del_\conv}\leq-\epsc,\quad\conv\in[0,1]
    \end{align*}
    If $\dotp{g_c}{\del_r}=\dotp{g_c}{\del_c}$ or $\dotp{g_c}{\del_r}<-\epsilon$ then the optimal solution is clearly $\conv\defeq 0$.

    Otherwise, first note that by the premise of this lemma and \cref{eq:gdel}, the ratio in \cref{eq:conv} is always in $[0,1]$. Besides, it is straightforward to see that $\dotp{g_c}{\del_\conv}\leq-\epsc$ iff
    \begin{align*}
        \conv\geq    \frac{\epsc + \dotp{g_c} {\del_r}}{\dotp{g_c}{\del_r}-\dotp{g_c}{\del_c}}&\qedhere
    \end{align*}
\end{proof}

\begin{lemma}[Exact solution of \pref{eq:approx}]
\label{prop:analytic}
Let $g_r,g_c\in\mathbb R^d$, $F\in\mathbb R^{d\times d}$ and let $\epsc>0$, $\epskl>0$. 
We assume the \emph{non-degenerate case}:
\[
g_r \neq 0, \quad g_c \neq 0, \quad g_c \not\parallel g_r, \quad F \succ 0.
\]

Then the optimal solution $\del^*$ of \pref{eq:approx} is
\[
\del^* =
\begin{cases}
\sqrt{\frac{2\epskl}{a}}\, F^{-1} g_r, & \text{if } \sqrt{\frac{2\epskl}{a}}\, b \le -\epsc,\\[0.5em]
\sqrt{\frac{2\epskl}{a - 2 \lambda^* b + (\lambda^*)^2 c}}\, F^{-1} (g_r - \lambda^* g_c), & \text{otherwise,}
\end{cases}
\]
where 
\[
a := \dotp{g_r}{F^{-1} g_r},\quad 
b := \dotp{g_r}{F^{-1} g_c},\quad 
c := \dotp{g_c}{F^{-1} g_c}.
\]
and $\lambda^*>0$ is the unique positive solution of the quadratic
\[
A \lambda^2 + B \lambda + C = 0, 
\quad\text{with}\quad
A = 2 \epskl c^2 - \epsc^2 c,\quad
B = -4 \epskl b c + 2 \epsc^2 b,\quad
C = 2 \epskl b^2 - \epsc^2 a.
\]
\end{lemma}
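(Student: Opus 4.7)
The natural approach is a KKT analysis of the convex quadratic program \pref{eq:approx}. Since $F\succ 0$, the problem is strictly convex; and since $g_c\neq 0$, $\epsc>0$, the cost constraint has a feasible strict interior (e.g.\ any small negative multiple of $F^{-1}g_c$), so Slater's condition holds and KKT is both necessary and sufficient. Introducing multipliers $\lambda\ge0$ for the cost constraint and $\nu\ge 0$ for the KL constraint, stationarity of the Lagrangian $\dotp{g_r}\del - \lambda(\dotp{g_c}\del+\epsc) - \nu(\tfrac12\del^\top F\del-\epskl)$ yields
\begin{equation*}
    g_r - \lambda\, g_c - \nu\, F\del \;=\; 0.
\end{equation*}
Since $g_r\neq 0$, any optimum must have $\nu>0$ (otherwise stationarity gives $g_r=\lambda g_c$, contradicting $g_c\not\parallel g_r$). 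Hence the KL constraint is active and
\begin{equation*}
    \del \;=\; \frac{1}{\nu}\, F^{-1}(g_r - \lambda g_c).
\end{equation*}

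I would then split into two cases according to complementary slackness. In the \emph{inactive} case $\lambda=0$, substituting $\del=\tfrac{1}{\nu}F^{-1}g_r$ into $\tfrac12\del^\top F\del=\epskl$ yields $\nu=\sqrt{a/(2\epskl)}$ and hence the first branch $\del^*=\sqrt{2\epskl/a}\,F^{-1}g_r$. This candidate is optimal precisely when it satisfies the cost constraint $\dotp{g_c}{\del^*}\le-\epsc$, which rearranges to $\sqrt{2\epskl/a}\,b\le-\epsc$, matching the condition stated in the lemma. In the \emph{active} case $\lambda>0$, both constraints are tight, so substituting $\del=\tfrac{1}{\nu}F^{-1}(g_r-\lambda g_c)$ yields two expressions for $\nu$:
\begin{align*}
    \nu \;=\; \sqrt{\frac{a - 2\lambda b + \lambda^2 c}{2\epskl}}
    \qquad\text{(from KL)},\qquad
    \nu \;=\; \frac{\lambda c - b}{\epsc}
    \qquad\text{(from cost)}.
\end{align*}

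Equating the squared forms and clearing denominators gives $2\epskl(\lambda c-b)^2=\epsc^{\,2}(a-2\lambda b+\lambda^2 c)$, which expands exactly to the quadratic $A\lambda^2+B\lambda+C=0$ stated in the lemma. Plugging the resulting $\lambda^*$ back into the KL-form expression for $\nu$ recovers the second branch of $\del^*$. Finally, we need to verify the existence and uniqueness of a positive root of the quadratic: this is where the main obstacle lies, since the sign of $A=c(2\epskl c-\epsc^{\,2})$ depends on the problem data. However, the condition under which case~2 applies is precisely $\sqrt{2\epskl/a}\,b>-\epsc$, i.e.\ the reward-only step violates the cost constraint; combined with Cauchy–Schwarz inequality $b^2\le ac$ (with strict inequality from $g_c\not\parallel g_r$), one can show that either $A>0$ with $C<0$ (yielding exactly one positive root), or $A\le 0$ with suitable signs on $B,C$ guaranteeing a unique positive root. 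The requirement $\nu>0$ then forces $\lambda c>b$, which selects among the roots; combined with $\lambda^*>0$ it singles out a unique multiplier. The candidate thus constructed satisfies all KKT conditions and is therefore the unique optimum by strict convexity.
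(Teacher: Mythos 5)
Your proposal follows essentially the same KKT route as the paper's proof: stationarity gives $\del = \tfrac{1}{\nu}F^{-1}(g_r-\lambda g_c)$ with $\nu>0$, the case split on whether the linear constraint is active yields the two branches, and squaring the active cost constraint produces the stated quadratic. If anything you are slightly more careful than the paper --- your argument that $\nu=0$ would force $g_r\parallel g_c$ is cleaner than the paper's unboundedness argument for the activity of the KL constraint, and your observation that $\nu>0$ (equivalently $\lambda c>b$) is what selects the correct root of the quadratic addresses a point the paper glosses over by simply asserting a ``unique positive root'' (squaring can in fact introduce a second positive but spurious root).
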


\begin{proof}
We form the Lagrangian with multipliers $\lambda \ge 0$ (linear constraint) and $\nu \ge 0$ (quadratic constraint):
\[
\mathcal{L}(\del, \lambda, \nu) = \dotp{g_r}{\del} + \lambda(-\epsc - \dotp{g_c}{\del}) + \nu\left(\epskl - \tfrac12 \del^\top F \del\right).
\]

Stationarity gives
\[
g_r - \lambda g_c - \nu F \del = 0
\quad \Longrightarrow \quad
\del = \frac{1}{\nu} F^{-1} (g_r - \lambda g_c), \quad \nu>0.
\]

\paragraph{Justification that $\nu>0$.} 
The objective is linear in $\del$ and $g_r \neq 0$, so it is unbounded in the direction of $g_r$ without the quadratic constraint.  
The linear constraint $\dotp{g_c}{\del} \le -\epsc$ defines a half-space. In the non-degenerate case $g_c \not\parallel g_r$, this half-space alone cannot bound the linear objective. Therefore, the quadratic constraint must be active at the solution, which by complementary slackness implies $\nu>0$.

\paragraph{Form of KKT candidates.}
Substituting $\del$ into the quadratic constraint $\tfrac12 \del^\top F \del = \epskl$ gives
\[
\nu = \sqrt{\frac{(g_r - \lambda g_c)^\top F^{-1} (g_r - \lambda g_c)}{2 \epskl}}.
\]
Hence all KKT candidates satisfy
\begin{align*}
    \del(\lambda) &= \sqrt{\frac{2 \epskl}{(g_r - \lambda g_c)^\top F^{-1} (g_r - \lambda g_c)}}\cdot F^{-1}\cdot (g_r - \lambda g_c)\\
    &=\sqrt{\frac{2 \epskl}{a - 2 \lambda b + \lambda^2 c}}\cdot F^{-1}\cdot (g_r - \lambda g_c)
\end{align*}
using the above definitions of $a$, $b$ and $c$, so that
\begin{align*}
    \dotp{g_c}{\del(\lambda)} = \sqrt{\frac{2 \epskl}{a - 2 \lambda b + \lambda^2 c}}\cdot (b - \lambda c)
\end{align*}

\paragraph{Complementary slackness for the linear constraint.}
Either $\lambda=0$ (linear constraint inactive) or $\lambda>0$ with equality $\dotp{g_c}{\del(\lambda)} = -\epsc$.

\paragraph{Case $\lambda=0$.} If $\dotp{g_c}{\del(0)}= \sqrt{\frac{2\epskl}{a}}\, b\le -\epsc$, then $\del^* = \del(0)$.

\paragraph{Case $\lambda>0$.} Solving $\dotp{g_c}{\del(\lambda)} = -\epsc$ and squaring both sides gives the quadratic equation $A \lambda^2 + B \lambda + C=0$ displayed in the proposition. The unique positive root $\lambda^*$ yields the optimal step $\del^* = \del(\lambda^*)$.

\paragraph{Summary.} In the non-degenerate case, the quadratic constraint is active ($\nu>0$) to ensure a finite optimum. The linear constraint determines $\lambda^*$, and $\del^*$ follows directly. 
\end{proof}

\subsection{Supplementary Materials for \cref{sec:perf}}
\label{app:perf}

For $M>0$, let $B_M(\theta)$ be the $M$-ball around $\theta$.
\begin{lemma}
\label{lem:perf}
    Assume $J_r$ and $J_c$ are $L$-Lipschitz smooth on $B_M(\theta)$ and $\|\del_r\|,\|\del_c\|\leq M$ for some $M>0$. Then for all $\scale\in[0,1]$,
\begin{enumerate}[noitemsep,topsep=0pt]
    \item $J_c(\theta+\scale\cdot\del)\leq J_c(\theta)+\scale\cdot\fimp\cdot\dotp {g_c}{\del_c}+\frac {L\cdot\scale^2}2\cdot\|\del\|^2$
    \item $J_r(\theta+\scale\cdot\del)\geq J_r(\theta)+\scale\cdot(1-\conv)\cdot\dotp{g_r}{\del_r}+\scale\cdot\conv\cdot\dotp {g_r}{\del_c}-\frac {L\cdot\scale^2}2\cdot\|\del\|^2$
    \item if $g_c= 0$ then 
        $J_r(\theta+\scale\cdot\del)\geq J_r(\theta)+\scale\cdot\dotp {g_r}{\del_r}-\frac {L\cdot\scale^2}2\cdot\|\del\|^2$.
\end{enumerate} 
\end{lemma}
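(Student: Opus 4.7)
The approach is to apply the standard quadratic upper/lower bounds implied by $L$-Lipschitz smoothness of $J_c$ and $J_r$ on $B_M(\theta)$ (the ``descent lemma'' and its sign-flipped counterpart), and then to plug in the linear bounds on $\dotp{g_c}{\del}$ and $\dotp{g_r}{\del}$ that follow directly from the construction of $\del$ and $\conv$.

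First I would verify that the entire segment $\{\theta+\scale\cdot\del:\scale\in[0,1]\}$ lies in $B_M(\theta)$, so that smoothness is actually applicable: by convexity of the norm and the hypotheses $\|\del_r\|,\|\del_c\|\leq M$, we have $\|\del\|\leq(1-\conv)\|\del_r\|+\conv\|\del_c\|\leq M$, and hence $\|\scale\cdot\del\|\leq M$ for every $\scale\in[0,1]$. For part~(1) I would then invoke the descent-lemma upper bound
\[
J_c(\theta+\scale\cdot\del)\leq J_c(\theta)+\scale\cdot\dotp{g_c}{\del}+\tfrac{L\cdot\scale^2}{2}\cdot\|\del\|^2,
\]
and substitute $\dotp{g_c}{\del}\leq-\epsc=\fimp\cdot\dotp{g_c}{\del_c}$. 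This last inequality is precisely the constraint that the definition of $\conv$ in \cref{eq:conv} is engineered to enforce (cf.~\cref{lem:conv}), combined with the choice $\epsc\defeq-\fimp\cdot\dotp{g_c}{\del_c}$.

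For part~(2) I would invoke the two-sided smoothness estimate $|J_r(\theta+v)-J_r(\theta)-\dotp{g_r}{v}|\leq\tfrac{L}{2}\|v\|^2$, which yields the symmetric lower bound
\[
J_r(\theta+\scale\cdot\del)\geq J_r(\theta)+\scale\cdot\dotp{g_r}{\del}-\tfrac{L\cdot\scale^2}{2}\cdot\|\del\|^2,
\]
and then decompose $\dotp{g_r}{\del}=(1-\conv)\cdot\dotp{g_r}{\del_r}+\conv\cdot\dotp{g_r}{\del_c}$ by linearity. For part~(3), if $g_c=0$ then $\dotp{g_c}{\del_r}=\dotp{g_c}{\del_c}=0$, so the first branch of \cref{eq:conv} does not trigger and $\conv=0$; hence $\del=\del_r$, and part~(2) specialises to the stated inequality.

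\textbf{Main obstacle.} The argument is essentially routine once the smoothness and convex-combination machinery is in place; the only subtleties I expect are (i) confirming that $\theta+\scale\cdot\del$ remains in $B_M(\theta)$ so that both smoothness inequalities are valid along the whole segment, and (ii) keeping the directions of the two smoothness bounds straight (upper for $J_c$, lower for $J_r$) while tracking signs when substituting the bound on $\dotp{g_c}{\del}$ into the cost inequality.
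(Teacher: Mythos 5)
Your proposal is correct and follows essentially the same route as the paper's proof: the quadratic Taylor/descent-lemma bounds on $B_M(\theta)$ (noting $\|\del\|\le(1-\conv)\|\del_r\|+\conv\|\del_c\|\le M$), the bound $\dotp{g_c}{\del}\le-\epsc=\fimp\cdot\dotp{g_c}{\del_c}$ from \cref{lem:conv} for part~(1), linearity of the inner product for part~(2), and $\conv=0$ when $g_c=0$ for part~(3).

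One remark worth recording: the decomposition you derive, $\dotp{g_r}{\del}=(1-\conv)\cdot\dotp{g_r}{\del_r}+\conv\cdot\dotp{g_r}{\del_c}$, is the correct one, but it does not literally match item~(2) as printed, where $(1-\conv)$ and $\conv$ are attached to the opposite inner products. Since $\dotp{g_r}{\del_r}\ge\dotp{g_r}{\del_c}$ by optimality of $\del_r$, the printed version is not implied by the Taylor lower bound when $\conv>1/2$ (and can fail for an exactly quadratic $J_r$), whereas item~(3) specialises from item~(2) at $\conv=0$ only under your reading. So the printed item~(2) contains a coefficient swap, and your argument --- like the paper's own one-line justification ``follows directly by definition of $\del$ as a convex combination'' --- establishes the corrected version; it would be worth stating the mismatch explicitly rather than leaving it implicit.
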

\begin{proof}
First, note that $g_r=\nabla J_r(\theta)$ and $g_c=\nabla J_c(\theta)$. 
    Therefore, by assumption and Taylor's theorem, for every $\scale\in[0,1]$,
    \begin{align*}
        J_c(\theta+\scale\cdot\del)\leq J_c(\theta)+\scale\cdot\dotp{g_c}\del+\frac {L\cdot\scale^2} 2\cdot\|\del\|^2\\
        J_r(\theta+\scale\cdot\del)\geq J_r(\theta)+\scale\cdot\dotp{g_r}\del-\frac {L\cdot\scale^2} 2\cdot\|\del\|^2
    \end{align*}
    since $\|\scale\cdot\del\|\leq M$.
    By the choice of $\epsc$ and \cref{lem:conv}, $\dotp{g_c}\del\leq\fimp\cdot\dotp{g_c}{\del_c}$ and the first claim follows.
    The second also follows directly by definition of $\del$ as a convex combination. Finally, if $g_c=0$ then $\conv=0$ and the third claim follows from the second.
\end{proof}

\perf*
\begin{proof}
Since $J_r$ and $J_c$ are smooth, they are $L$-smooth on the bounded set $B_M(\theta)$ for $M\defeq\max\{\|\del_r\|,\|\del_c\|\}$ and sufficiently large $L>0$.

    If $g_c\neq 0$ then $\dotp{g_c}{\del_c}<0$ and the claim follows from \cref{lem:perf} for sufficiently small $\scale\in(0,1]$.

    Next, suppose that $g_r\neq 0$ and $\dotp{g_r}{\del_c}\geq 0$. By definition of $\del_r$ in \cref{eq:approxr}, $\dotp{g_r}{\del_r}> 0$.
    Besides, by \cref{lem:perf}, 
    \begin{align*}
        J_r(\theta+\scale\cdot\del)\geq J_r(\theta)+\scale\cdot(1-\conv)\cdot\dotp{g_r}{\del_r}-\frac {L\cdot\scale^2}2\cdot\|\del\|^2
    \end{align*}
and the claim follows for sufficiently small $\scale\in(0,1]$.
\end{proof}

\section{Supplementary Materials for \cref{sec:exp}}
\subsection{Supplementary Materials for \cref{sec:setup}}
\label{app:setup}
\subsubsection{Safety Gymnasium Tasks}
\label{app:sge}

We provide additional details on the Safety Gymnasium tasks used in our experiments (see \citep{ji2023safety} and \url{https://safety-gymnasium.readthedocs.io/} for the full details). 

Safety Gymnasium uses reward shaping to provide the agent with auxiliary guidance signals that facilitate more efficient policy optimisation. 
In the Goal tasks of Safe Navigation environments, for example, the agent receives dense rewards based on its movement relative to the goal, supplementing the sparse task completion reward, so that the cumulative reward can be negative if the agent moves significantly away from the goal. 
A cost is also computed to quantify constraint violations, with scales that are task dependent. 
In our selected tasks, these violations include contact with hazards, displacement of hazards, and leaving safe regions in Safe Navigation, and exceeding velocity limits in Safe Velocity. 

\begin{figure}
    \centering
    \begin{subfigure}{0.24\linewidth}
        \includegraphics[width=\linewidth]{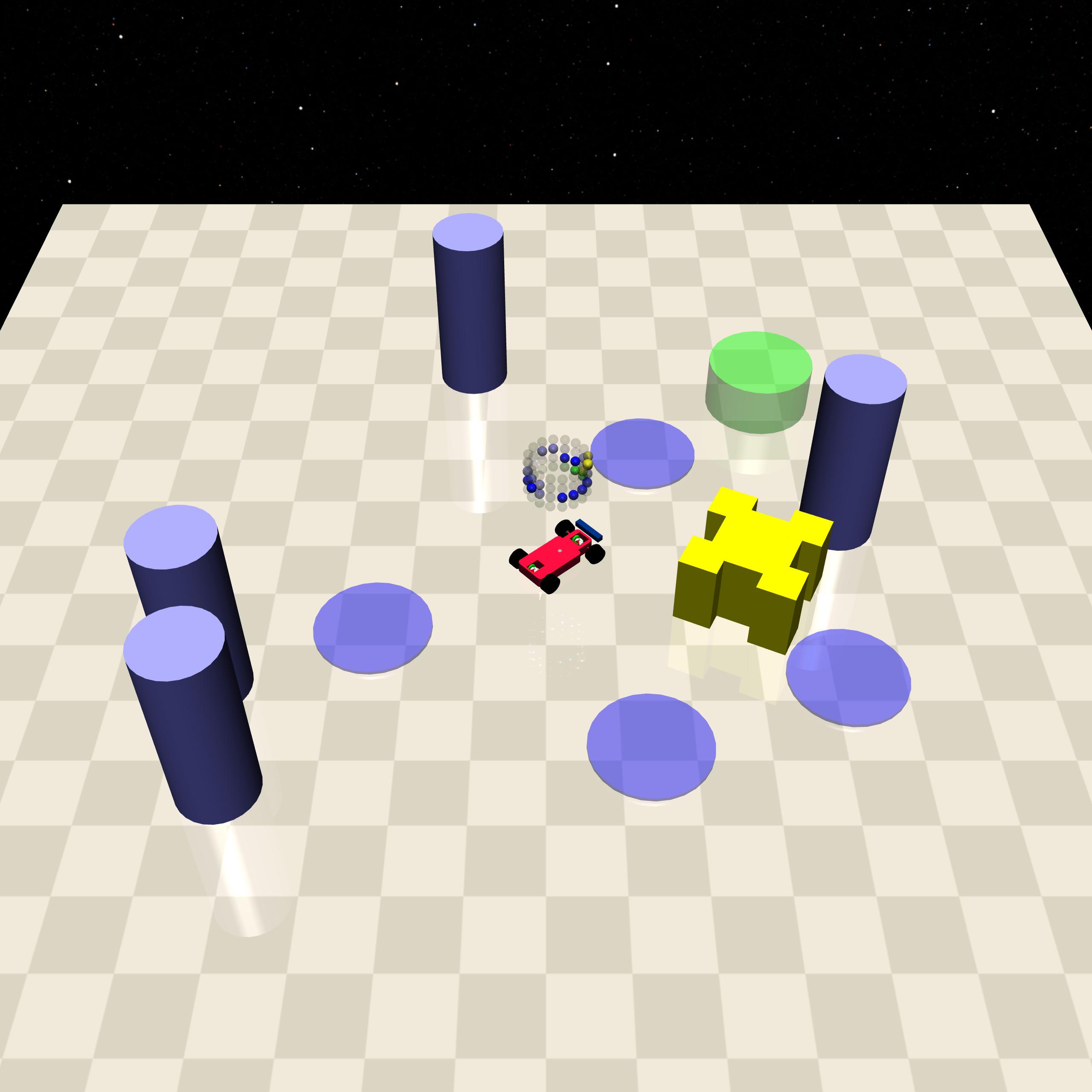}
        \caption{Push}
    \end{subfigure}
    \begin{subfigure}{0.24\linewidth}
        \includegraphics[width=\linewidth]{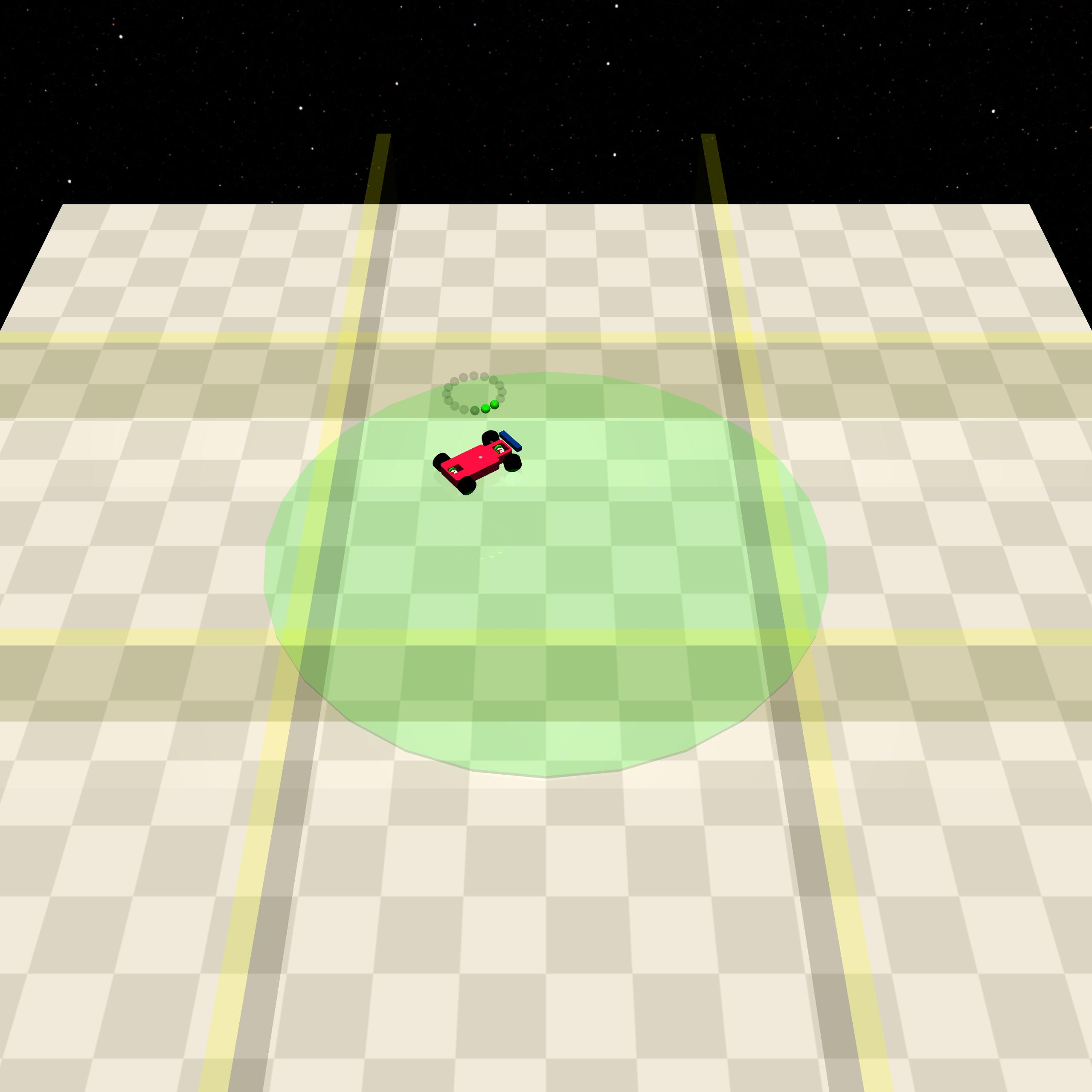}
        \caption{Circle}
    \end{subfigure}
    \begin{subfigure}{0.24\linewidth}
        \includegraphics[width=\linewidth]{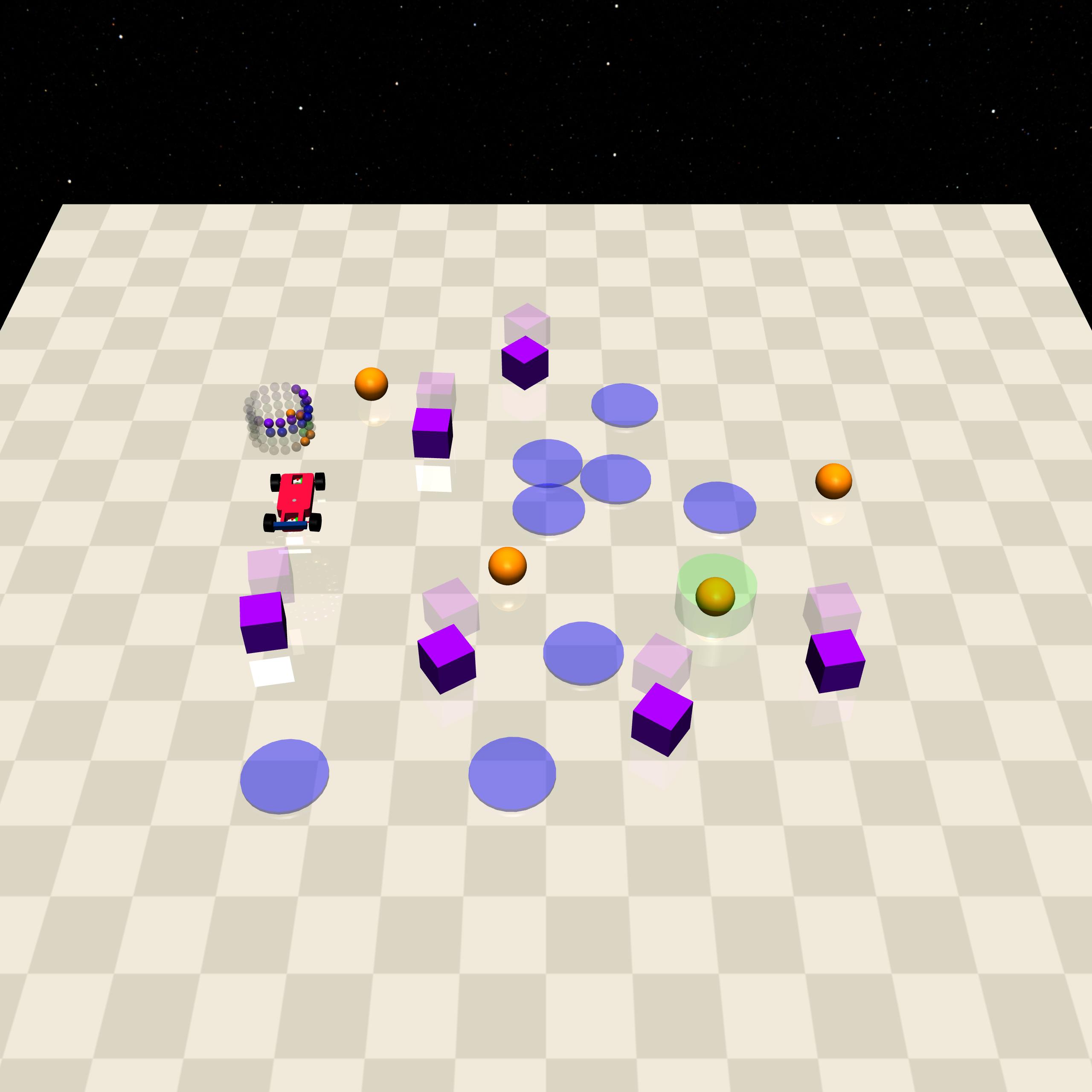}
        \caption{Button}
    \end{subfigure}
    \begin{subfigure}{0.24\linewidth}
        \includegraphics[width=\linewidth]{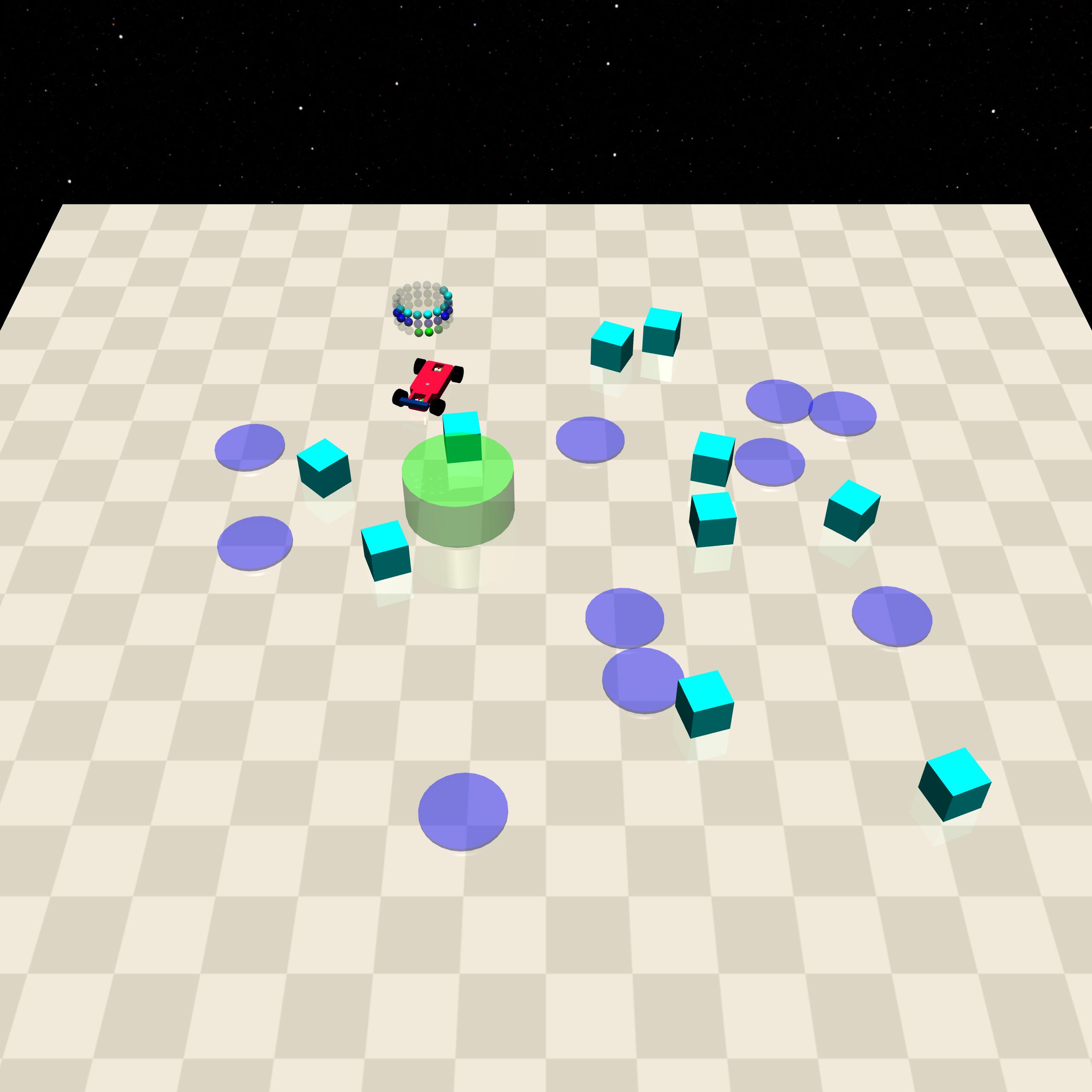}
        \caption{Goal}
    \end{subfigure}
    
    \caption{Safe navigation tasks of Safety Gymnasium \citep{ji2023safety} (images taken from \url{https://safety-gymnasium.readthedocs.io/en/latest/environments/safe_navigation.html})}
    \label{fig:sge}
\end{figure}
\begin{table}[t]
\caption{Overview of the Safety Gymnasium navigation tasks used in our experiments. The dimensionality refers to the state space for the Point and Car agents, respectively. (Recall that we always use level 2 of the tasks, which is the most difficult.)}
\label{tab:task}
\centering
\begin{tabularx}{\linewidth}{lXl}
\toprule
Task & Objective & dim. \\
\midrule
Push  & Push the box (yellow) to the goal location (green) while avoiding hazards and pillars (both blue).&76/88\\
Circle & Circulate around the center of the circular area (green) while remaining within the boundaries (yellow), with rewards highest along the outermost circumference and increasing with speed. & 28/40\\
Button & Navigate to the correct goal button (orange with green halo) location and press it while avoiding gremlins (purple) and hazards (blue). & 76/88\\
Goal & Navigate to the goal (green) while avoiding hazards (blue) and vases (teal). & 60/72\\
\bottomrule
\end{tabularx}
\end{table}

Safe Navigation Tasks are described in \cref{tab:task} and visualised in \cref{fig:sge}.
Visual depictions and complete descriptions of the Safe Velocity tasks are available at \url{https://safety-gymnasium.readthedocs.io/en/latest/environments/safe_velocity.html}.

\paragraph{Notes on Agents.}
\begin{itemize}[noitemsep,topsep=0pt]
    \item \textbf{Point:} A 2D robot with two actuators for rotation and forward/backward movement.
    \item \textbf{Car:} A 3D robot with two independently driven wheels and a free rear wheel, requiring coordinated steering and movement.
    \item The Safe Velocity tasks are based on agents from MuJoCo \citep{todorov2012mujoco} locomotion tasks, which are extensively documented at \url{https://gymnasium.farama.org/environments/mujoco/}.
\end{itemize}

\subsubsection{Selection of Hyperparameters}
\label{app:hyper}
\label{app:ht}
\begin{table*}[t]
\centering
\caption{Hyperparameters for SB-TRPO (Ours) and baselines.}
\label{tab:hyperparams}
\resizebox{\textwidth}{!}{%
\begin{tabular}{lccccccccccccc}
\toprule
\textbf{Hyperparameter} & \multicolumn{2}{c}{\textbf{SB-TRPO (Ours)}} & \textbf{PPO-Lag} & \textbf{TRPO-Lag} & \textbf{CPO} & \textbf{CUP} & \textbf{CPPO-PID} & \textbf{FOCOPS} & \textbf{RCPO} & \textbf{PCPO} & \textbf{P3O} & \textbf{C-TRPO} & \textbf{C3PO} \\
 & \textbf{With Critic} & \textbf{No Critic} &  &  &  &  &  &  &  &  &  &  &  \\
\midrule
Discount Factor $\gamma$ & 0.99 & 0.99 & 0.99 & 0.99 & 0.99 & 0.99 & 0.99 & 0.99 & 0.99 & 0.99 & 0.99 & 0.99 & 0.99 \\
Target KL & 0.01 & 0.01 & 0.02 & 0.01 & 0.01 & 0.02 & 0.02 & 0.02 & 0.01 & 0.01 & 0.02 & 0.01 & 0.01 \\
GAE $\lambda$ & 0.95 & -- & 0.95 & 0.95 & 0.95 & 0.95 & 0.95 & 0.95 & 0.95 & 0.95 & 0.95 & 0.95 & 0.95 \\
Timesteps per Epoch & 20000 & 20000 & 20000 & 20000 & 20000 & 20000 & 20000 & 20000 & 20000 & 20000 & 20000 & 20000 & 20000 \\
Policy Hidden Layers & \{64, 64\} & \{64, 64\} & \{64, 64\} & \{64, 64\} & \{64, 64\} & \{64, 64\} & \{64, 64\} & \{64, 64\} & \{64, 64\} & \{64, 64\} & \{64, 64\} & \{64, 64\} & \{64, 64\} \\
Policy Batch Size & 20000 & 20000 & 64 & 20000 & 20000 & 20000 & 64 & 64 & 20000 & 20000 & 64 & 20000 & 128 \\
Actor Learning Rate & -- & -- & 0.0003 & -- & -- & 0.0003 & 0.0003 & 0.0003 & -- & -- & 0.0003 & -- & 0.0003 \\
Actor Optimizer & -- & -- & ADAM & -- & -- & ADAM & ADAM & ADAM & -- & -- & ADAM & -- & ADAM \\
Actor Update Iterations & -- & -- & 40 & -- & -- & 40 & 40 & 40 & -- & -- & 40 & -- & 10 \\
Critic Hidden Layers & \{64, 64\} & \{64, 64\} & \{64, 64\} & \{64, 64\} & \{64, 64\} & \{64, 64\} & \{64, 64\} & \{64, 64\} & \{64, 64\} & \{64, 64\} & \{64, 64\} & \{64, 64\} & \{64, 64\} \\
Critic Batch Size & 128 & -- & 64 & 128 & 128 & 64 & 64 & 64 & 128 & 128 & 64 & 256 & 128 \\
Critic Learning Rate & 0.001 & -- & 0.0003 & 0.001 & 0.001 & 0.0003 & 0.0003 & 0.0003 & 0.001 & 0.001 & 0.0003 & 0.001 & 0.0003 \\
Critic Optimizer & ADAM & -- & ADAM & ADAM & ADAM & ADAM & ADAM & ADAM & ADAM & ADAM & ADAM & ADAM & ADAM \\
Critic Update Iterations & 10 & -- & 40 & 10 & 10 & 40 & 40 & 40 & 10 & 10 & 40 & 10 & 10 \\
Cost Limit & -- & -- & 0.00 & 0.00 & 0.00 & 0.00 & 0.00 & 0.00 & 0.00 & 0.00 & 0.00 & 0.00 & 0.00 \\
Clip Coefficient & -- & -- & 0.20 & -- & -- & 0.20 & -- & -- & -- & -- & 0.20 & -- & 0.20 \\
Conjugate Gradient (C.G.) Iterations & 50 & 50 & -- & 15 & 15 & -- & -- & -- & 15 & 15 & -- & 15 & -- \\
C.G. Tikhonov Regularisation Coefficient & 0.02 & 0.02 & -- & 0.1 & 0.1 & -- & -- & -- & 0.1 & 0.1 & -- & 0.1 & -- \\
Update Scaling Steps & 100 & 100 & -- & 15 & 15 & -- & -- & -- & -- & 200 & -- & 15 & -- \\
Step Fraction & 0.80 & 0.80 & -- & 0.80 & 0.80 & -- & -- & -- & -- & 0.80 & -- & 0.80 & -- \\
Lagrangian Initial Value & -- & -- & 0.001 & 0.001 & -- & 0.001 & 0.001 & 0.001 & -- & -- & -- & -- & -- \\
Lagrangian Learning Rate & -- & -- & 0.035 & 0.035 & -- & 0.035 & -- & -- & -- & -- & -- & -- & -- \\
Lagrangian Optimizer & -- & -- & ADAM & ADAM & -- & ADAM & -- & -- & -- & -- & -- & -- & -- \\
CUP, FOCOPS $\lambda$ & -- & -- & -- & -- & -- & 0.95 & -- & 1.50 & -- & -- & -- & -- & -- \\
CUP, FOCOPS $\nu$ & -- & -- & -- & -- & -- & 2.00 & -- & 2.00 & -- & -- & -- & -- & -- \\
CPPO-PID $K_p$ & -- & -- & -- & -- & -- & -- & 0.1 & -- & -- & -- & -- & -- & -- \\
CPPO-PID $K_i$ & -- & -- & -- & -- & -- & -- & 0.01 & -- & -- & -- & -- & -- & -- \\
CPPO-PID $K_d$ & -- & -- & -- & -- & -- & -- & 0.1 & -- & -- & -- & -- & -- & -- \\
P3O, C3PO $\kappa$ & -- & -- & -- & -- & -- & -- & -- & -- & -- & -- & 10.0 & -- & 50.0 \\
C-TRPO $\beta$ & -- & -- & -- & -- & -- & -- & -- & -- & -- & -- & -- & 1.0 & -- \\
C3PO $\omega$ & -- & -- & -- & -- & -- & -- & -- & -- & -- & -- & -- & -- & 0.05 \\
\alg $\fimp$ & 0.75 & 0.75 & -- & -- & -- & -- & -- & -- & -- & -- & -- & -- & -- \\
\bottomrule
\end{tabular}
}
\end{table*}

\cref{tab:hyperparams} lists all hyperparameters. 
For the baselines, we use the official SafePO defaults \citep{ji2023safety}, 
setting the cost limit to $0$ to target almost-sure safety. 
For \alg, most hyperparameters are inherited from related methods 
(TRPO-Lagrangian and CPO), and critic-related settings are kept identical in critic-inclusive runs. 
In runs without critics we use Monte Carlo returns, i.e.\ GAE with $\lambda=1$.
The only novel parameter is the safety bias $\fimp$, for which no established default exists. 
We therefore performed a light grid search over $[0.6,0.9]$ as shown in \cref{fig:fimp_scatter} in \cref{sec:ablation} and \cref{fig:fimp_scatter2} in \cref{app:fimp}. Different choices of $\fimp$ trace out an approximate linear 
reward-cost Pareto frontier, while baselines typically underperform relative to this frontier. 
For the main results (cf.~\cref{tab:main-results}), we chose \emph{$\fimp\defeq 0.75$ across all tasks}, 
highlighting the robustness of our approach.

\subsection{Formal Definition of Safety Metrics}
\label{app:metric}

\emph{Safety probability} is the fraction of episodes completed without any safety violation, i.e.\ with zero cost;
\emph{safe reward} is the average return over all episodes, with returns from episodes with any safety violation set to $0$:
\begin{align}
\label{eq:defmetric}
    \text{safety probability}&\defeq \frac{1}{N}\sum_{i=1}^N \mathbf{1}[C_i = 0]
&
\text{safe reward}&\defeq \frac{1}{N}\sum_{i=1}^N R_i\cdot\mathbf{1}[C_i = 0]
\end{align}
where $R_i$ and $C_i$ are the total return and cost of episode $i$.

\subsubsection{Reproduction of Results}
\label{app:rep}
All experiments were run with \texttt{Python~3.8}. To reproduce our results, first install Safety-Gymnasium as documented at \url{https://github.com/PKU-Alignment/safety-gymnasium#installation}, and then extract the supplementary code provided with this submission. Inside the extracted folder, install SafePO \citep{ji2023safety} integrated with our approach by running:
\begin{verbatim}
pip install -e .
\end{verbatim}
Experiments can be reproduced in two ways. To train a single baseline (e.g., PPO-Lagrangian) with a specific seed (e.g., 2000) on a given environment (e.g., SafetyPointGoal2-v0), navigate to the \texttt{single{\textunderscore}agent} directory and run:
\begin{verbatim}
python ppo_lag.py --task SafetyPointGoal2-v0 --seed 2000 --cost-limit 0
\end{verbatim}
Similarly, for our method, \alg, at a specific safety bias (e.g., $\fimp=0.8$), use:
\begin{verbatim}
python sb-trpo.py --task SafetyPointGoal2-v0 --seed 2000 --beta 0.8
\end{verbatim}
Additional options can be found in \texttt{utils/config.py} under \texttt{single{\textunderscore}agent{\textunderscore}args}.

For large-scale benchmarking, multiple algorithms can be launched in parallel with a single command:
\begin{verbatim}
python benchmark.py
--tasks SafetyHopperVelocity-v1 SafetySwimmerVelocity-v1
--algo ppo_lag trpo_lag cpo cup sb-trpo 
--workers 25 --num-envs 20 --steps-per-epoch 20000
--total-steps 20000000 --num-seeds 5
--cost-limit 0 --beta 0.75
\end{verbatim}
This command runs all baselines considered in the paper at a cost limit of 0, alongside our method with $\fimp=0.7$, using 25 training processes in parallel with 20 vectorized environments per process and 5 random seeds. Further options are documented in \texttt{benchmark.py}.

Finally, our ablations are implemented as \texttt{sb-trpo{\textunderscore}rcritic} (including a reward critic) and \texttt{sb-trpo{\textunderscore}critic} (including both reward and cost critics).

\subsection{Supplementary Materials for \cref{sec:res}}
\label{app:res}

\begin{figure*}[t!]
    \centering
    \includegraphics[width=\textwidth]{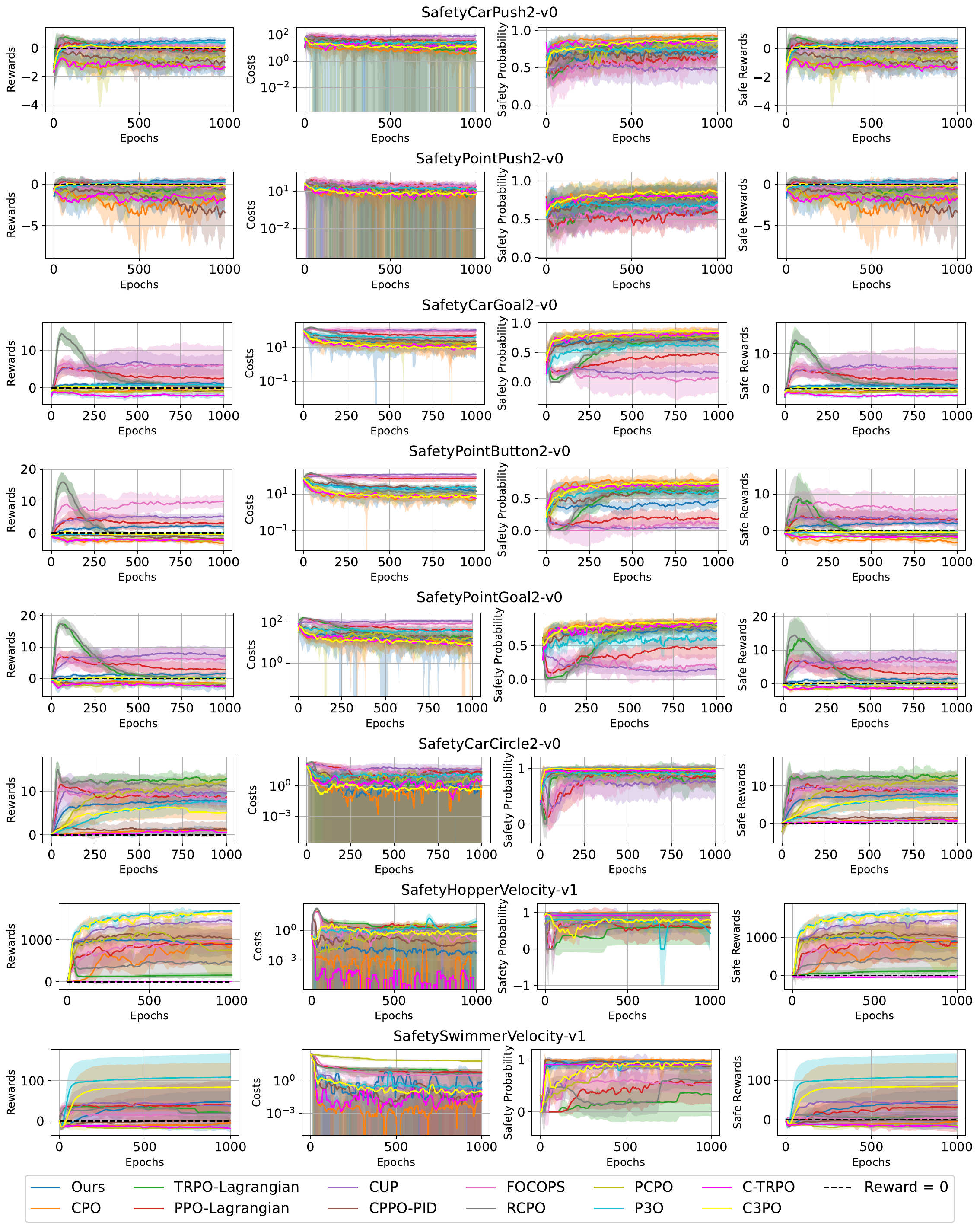}
    \caption{Training curves}
    \label{fig:part_1}
\end{figure*}
\begin{figure*}[h]
    \centering
    \includegraphics[width=\textwidth]{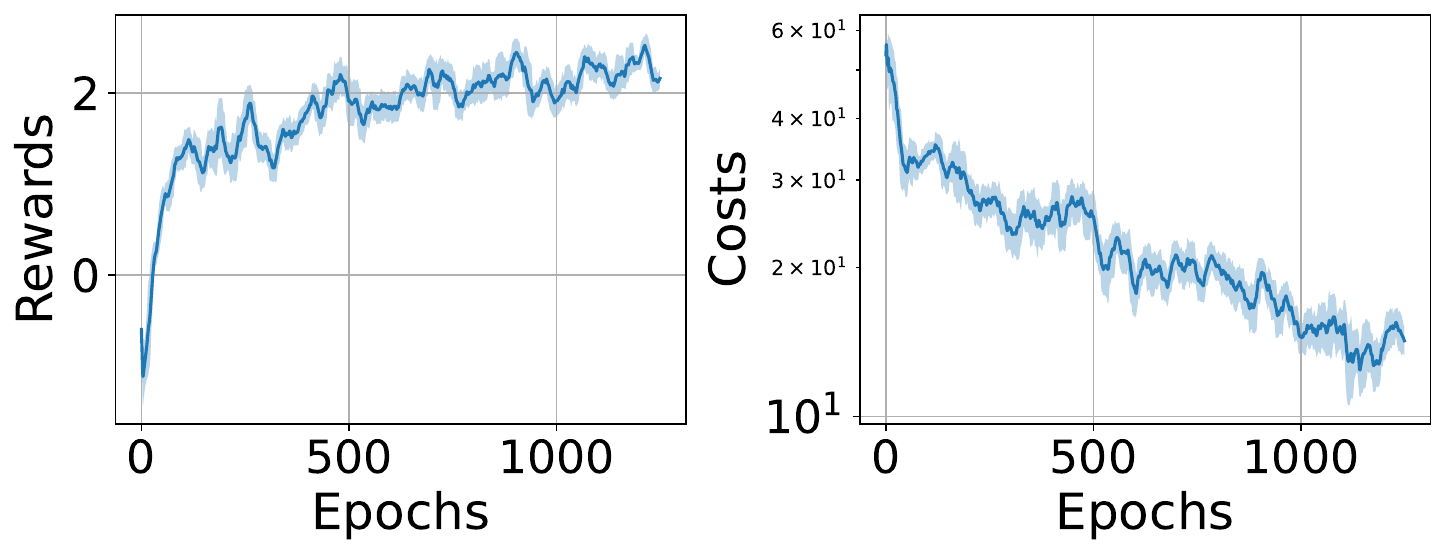}
    \caption{Point Button: training longer, doing better.}
    \label{fig:pb_long}
\end{figure*}

The training curves, which are smoothed with averaging over the past 20 epochs, are presented in \cref{fig:part_1}.

\begin{figure}
        \centering
        \includegraphics[width=\linewidth]{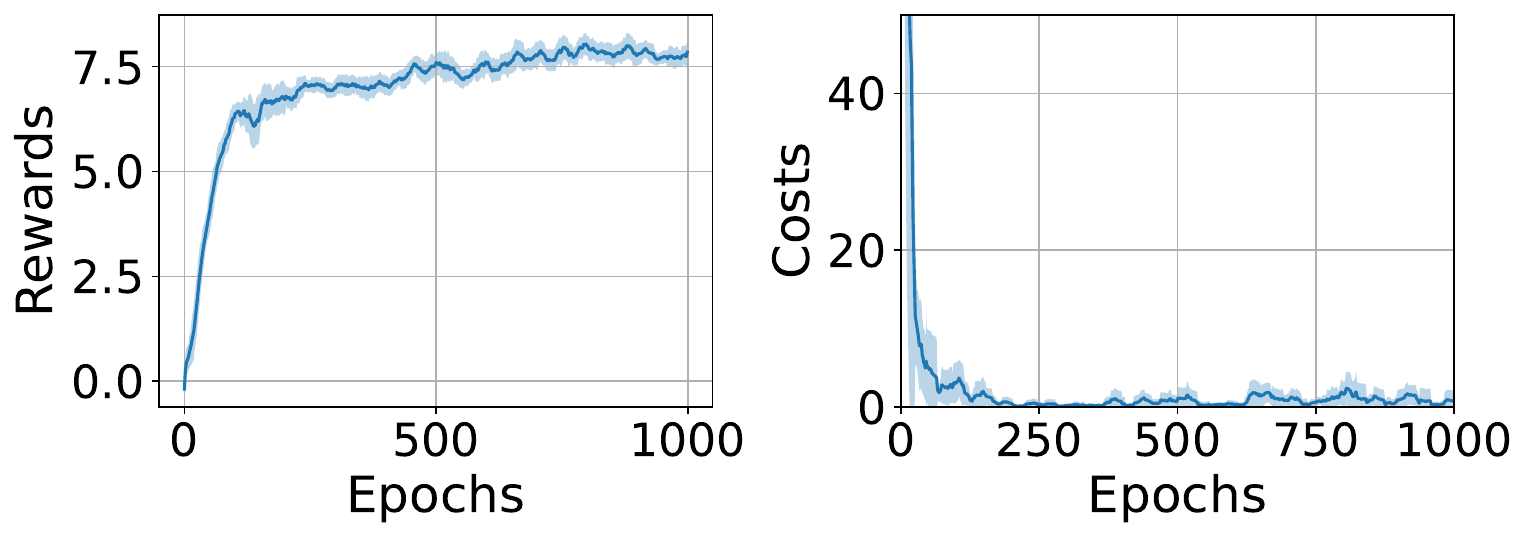}
        \caption{Car Circle: staying safe and improving reward}
        \label{fig:cost_imp}
    \end{figure}
    
Beyond final performance, our approach also  exhibits robustness in practice:  
temporary increases in cost are typically corrected quickly whilst improving rewards overall (see \cref{fig:cost_imp}).
Moreover, longer training can continue to improve both reward and feasibility for our method, while the baselines plateau (cf.~\cref{fig:pb_long,fig:part_1} in \cref{app:res}). \dw{NB moved here}

Additionally, we train longer on Point Button to confirm further feasibility and reward improvements.
This is presented in \cref{fig:pb_long}. As anticipated, the improvements are non-diminishing even for a 125\% longer training run.


We compare the compute costs of our approach with the baselines w.r.t. the update times per epoch on a single, non-parallelized training run in \cref{tab:time}. This demonstrates that training critics is expensive across all critic-integrated approaches.

\subsubsection{Qualitative Behaviour and Videos}
We compare the qualitative behaviours of policies obtained from our approach with the baselines using recorded videos (provided in the supplementary material) across four benchmark tasks: Point Button, Car Circle, Car Goal, and Swimmer Velocity. The results in \cref{tab:main-results} are strongly corroborated by the behaviours observed in these videos.

\textbf{Point Button.} Most approaches exhibit low reward affinity and consequently engage in overconservative behaviours, such as moving away from interactive regions in the environment. Among the baselines, CUP, PPO-Lag, and \alg show visible reward-seeking behaviours, in decreasing order of intensity. CUP sometimes ignores obstacles entirely and is confused by moving obstacles (Gremlins). PPO-Lag avoids Gremlins more intelligently but is not particularly creative in approaching the target button. In contrast, \alg exhibits a creative strategy: it slowly circles the interactive region, navigating around moving obstacles to reach the target button.

\textbf{Car Circle.} CPO and C-TRPO fail to fully associate circling movement with reward acquisition, resulting in back-and-forth movements near the circle’s centre and partial rewards. Consequently, these policies rarely violate constraints. C3PO learns to circle in a very small radius to acquire rewards, but its circling behaviour is intermittent. P3O, CUP, and PPO-Lag learn to circle effectively but struggle to remain within boundaries, incurring costs. TRPO-Lag and \alg achieve both effective circling and boundary adherence.

\textbf{Car Goal.} CPO, C3PO, and C-TRPO display low reward affinity, leading to overconservative behaviours such as avoiding the interactive region. TRPO-Lag is unproductive and does not avoid hazards reliably. CUP and PPO-Lag are highly reward-driven but reckless with hazards. P3O begins to learn safe navigation around the interactive region, though occasional recklessness remains. SB-TRPO achieves productive behaviour while intelligently avoiding hazards, e.g., by slowly steering around obstacles or moving strategically within the interactive region.

\textbf{Swimmer Velocity.} CPO and C-TRPO sometimes move backward or remain unproductively in place. TRPO-Lag and PPO-Lag move forward but engage in unsafe behaviours. CUP and \alg are productive in moving forward while being less unsafe. C3PO and P3O are the most effective, moving forward consistently while exhibiting the fewest unsafe behaviours.

\subsubsection{Deviations from Theory in Training Curves} 
While our theoretical results (\cref{thm:perf}) guarantee consistent improvement in both reward and cost for sufficiently small steps, practical training exhibits deviations due to finite sample estimates and the sparsity of the cost signal in some tasks. We summarise task-specific observations:

\begin{itemize}[noitemsep,topsep=0pt]
    \item In \textbf{Swimmer Velocity}, some update steps may not immediately reduce incurred costs due to estimation noise, which explains delays in achieving perfect feasibility.
    \item In \textbf{Car Circle}, policies that previously attained near-zero cost sometimes return temporarily to infeasible behaviour. This arises because the update rule does not explicitly encode ``distance'' from infeasible regions within the feasible set. Nonetheless, policies eventually return to feasibility thanks to the guarantees of our approach.
    \item In \textbf{Hopper Velocity}, after quickly achieving high reward, performance shows a slow decline accompanied by gradual improvements in cost. This reflects the prioritisation of feasibility improvement over reward during some updates, without drastically compromising either objective.
    \item In \textbf{navigation tasks}, especially \textbf{Button, Goal, and Push}, the cost signal is sparse: a cost of 1 occurs only if a hazard is touched. This sparsity makes perfect safety more difficult to achieve, as most updates provide limited feedback about near-misses or almost-unsafe trajectories. Despite this, SB-TRPO steadily reduces unsafe visits and substantially outperforms baselines.
\end{itemize}

Overall, these deviations are a natural consequence of finite sample estimates, quadratic approximations, and sparse cost signals. In spite of this, SB-TRPO consistently improves reward and safety together, achieving higher returns and lower violations than all considered baselines.

\begin{table}[t]
    \centering
    \caption{Update times per epoch on Point Goal on a single, non-parallelized training run, based on the hyperparameters in \cref{tab:hyperparams}. Mean $\pm$ STDEV is reported, rounded to 2 decimal places. The least update time consumed per epoch is in \textbf{bold}. Note that PPO-Lagrangian, in contrast to TRPO-Lagrangian, has a budget of up to 40 gradients steps per epoch (as per \cref{tab:hyperparams}).}
    \label{tab:example}
    \resizebox{\textwidth}{!}{%
    \begin{tabular}{lcccccccccccccc}
        \toprule
         & \alg (MC) & 
         \alg (GAE) & TRPO-Lag & CPO & PPO-Lag & CUP & CPPO-PID & FOCOPS & RCPO & PCPO & P3O & C-TRPO & C3PO \\
        \midrule
        Time$^\downarrow$ & $\mathbf{0.26\pm0.08}$ & 
        $8.12\pm0.95$ & $5.85\pm0.79$ & $6.59\pm1.96$ & $69.83\pm8.29$ & $84.60\pm15.54$ & $88.32\pm18.82$ & $53.54\pm19.34$ & $4.40\pm0.35$ & $5.74\pm1.19$ & $78.05\pm20.14$ & $3.32\pm0.16$ & $25.06\pm2.03$ \\
        \bottomrule
    \end{tabular}
    \label{tab:time}
    }
\end{table}

\subsubsection{Supplementary Materials for \cref{sec:align}}
\label{app:align}

\begin{figure}[t!]
    \centering
    \begin{subfigure}{0.49\linewidth}
    \includegraphics[width=\linewidth]{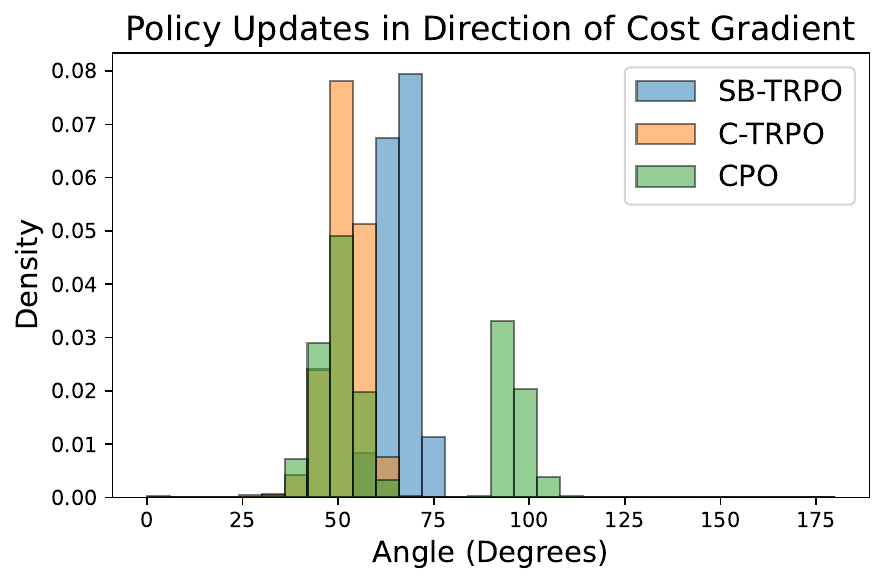}
    \caption{Car Circle}
    \end{subfigure}
    \begin{subfigure}{0.49\linewidth}
    \includegraphics[width=\linewidth]{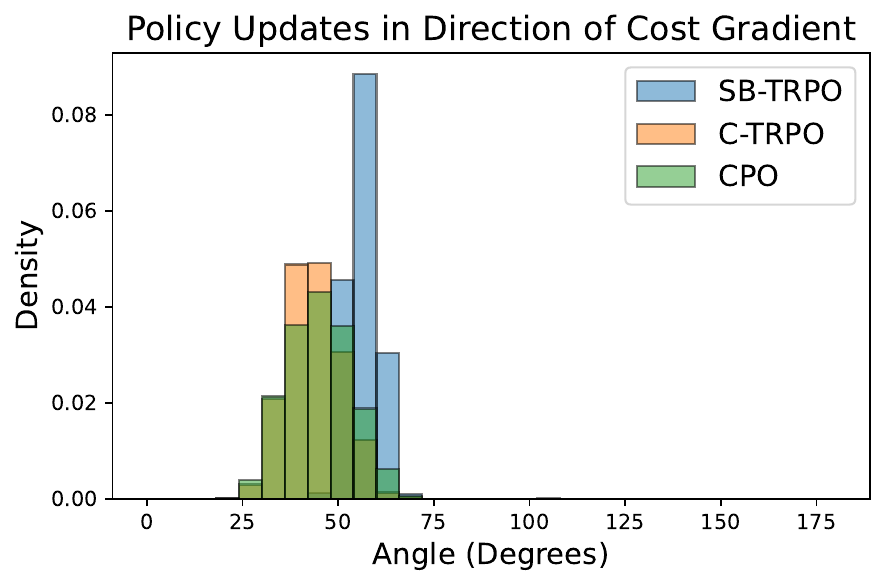}
        \caption{Point Button}
    \end{subfigure}    
    \caption{Angles between policy updates and cost gradients}
    \label{fig:histc}    
\end{figure}
Angles between policy updates and cost gradients are plotted in \cref{fig:histc}.

\subsubsection{Supplementary Materials for \cref{sec:clim}}
\label{app:clim}
We study the performance of baselines under a positive cost threshold of 15 (see \cref{tab:results-15}).

\begin{table*}[t!]
    \centering
    \caption{Selected baselines with a relaxed cost limit of 15 (using the same highlighting as in \cref{tab:main-results}).}
    \resizebox{\textwidth}{!}{%
    \begin{tabular}{lccccccccc}
\toprule
\multirow{2}{*}{Method} & Cost & \multicolumn{2}{c}{Point Push} & \multicolumn{2}{c}{Point Goal} & \multicolumn{2}{c}{Car Circle} & \multicolumn{2}{c}{Swimmer Velocity} \\
\cmidrule(lr){3-10}
& limit & Safe Rew. & Safe Prob. & Safe Rew. & Safe Prob. & Safe Rew. & Safe Prob. & Safe Rew. & Safe Prob. \\
\midrule
Ours & -- & $0.33\pm0.18$ & $\boldsymbol{0.79\pm0.08}$ & $\boldsymbol{1.6\pm0.5}$ & $0.68\pm0.07$ & $7.5\pm1.5$ & $\boldsymbol{0.99\pm0.01}$ & $48\pm26$ & $\boldsymbol{0.98\pm0.02}$ \\
\midrule TRPO-Lag & 0 & \color{negrew}$-0.49\pm0.45$ & \color{negrew}$0.78\pm0.16$ & $0.09\pm0.29$ & $0.79\pm0.06$ & $\boldsymbol{11\pm4}$ & $0.81\pm0.26$ & \color{dom}$\mathit{1.3\pm5.1}$ & \color{dom}$\mathit{0.33\pm0.42}$ \\
TRPO-Lag & 15 & \color{dom}$0.29\pm0.28$ & \color{dom}$0.64\pm0.12$ & \color{negrew}$-0.18\pm0.53$ & \color{negrew}$0.82\pm0.08$ & \color{dom}$\mathit{6.8\pm1.5}$ & \color{dom}$\mathit{0.44\pm0.09}$ & \color{dom}$\mathit{0.0\pm0.0}$ & \color{dom}$\mathit{0.0\pm0.0}$ \\
\midrule CPO & 0 & \color{negrew}$-1.2\pm0.9$ & \color{negrew}$0.77\pm0.19$ & \color{negrew}$-1.3\pm0.4$ & \color{negrew}$0.86\pm0.05$ & \color{dom}$0.96\pm0.76$ & \color{dom}$\boldsymbol{0.99\pm0.03}$ & \color{negrew}$-3.8\pm9.3$ & \color{negrew}$0.99\pm0.03$ \\
CPO & 15 & \color{dom}$0.24\pm0.14$ & \color{dom}$0.61\pm0.07$ & \color{dom}$0.29\pm0.67$ & \color{dom}$0.69\pm0.11$ & \color{dom}$8.3\pm1.6$ & \color{dom}$0.54\pm0.11$ & \color{dom}$\mathit{0.0014\pm0.0098}$ & \color{dom}$\mathit{0.001\pm0.0052}$ \\
\midrule RCPO & 0 & \color{negrew}$-0.48\pm0.45$ & \color{negrew}$0.84\pm0.08$ & \color{negrew}$-0.012\pm0.366$ & \color{negrew}$0.85\pm0.06$ & $10\pm3$ & $0.9\pm0.18$ & \color{dom}$7.0\pm12.3$ & \color{dom}$0.67\pm0.25$ \\
RCPO & 15 & $0.43\pm0.19$ & $0.6\pm0.1$ & $1.0\pm0.5$ & $0.71\pm0.07$ & \color{dom}$9.4\pm3.6$ & \color{dom}$0.61\pm0.23$ & \color{dom}$\mathit{0.0\pm0.0}$ & \color{dom}$\mathit{0.0\pm0.0}$ \\
\midrule C-TRPO & 0 & \color{negrew}$-1.4\pm1.2$ & \color{negrew}$0.83\pm0.09$ & \color{negrew}$-2.2\pm1.1$ & \color{negrew}$0.88\pm0.06$ & \color{dom}$0.54\pm0.5$ & \color{dom}$0.97\pm0.05$ & \color{negrew}$-18\pm8$ & \color{negrew}$0.97\pm0.03$ \\
C-TRPO & 15 & $\boldsymbol{0.48\pm0.3}$ & $0.58\pm0.08$ & \color{dom}$1.2\pm0.5$ & \color{dom}$0.66\pm0.08$ & \color{dom}$\boldsymbol{11\pm1}$ & \color{dom}$0.69\pm0.08$ & \color{dom}$\mathit{0.05\pm0.16}$ & \color{dom}$\mathit{0.0092\pm0.0223}$ \\
\midrule C3PO & 0 & \color{negrew}$-0.15\pm0.18$ & \color{negrew}$0.85\pm0.05$ & \color{negrew}$-0.04\pm0.634$ & \color{negrew}$0.86\pm0.06$ & \color{dom}$5.1\pm1.7$ & \color{dom}$\boldsymbol{0.99\pm0.01}$ & $\boldsymbol{78\pm56}$ & $0.92\pm0.08$ \\
C3PO & 15 & \color{dom}$0.29\pm0.17$ & \color{dom}$0.65\pm0.07$ & $0.04\pm0.247$ & $\boldsymbol{0.81\pm0.05}$ & \color{dom}$\boldsymbol{11\pm1}$ & \color{dom}$0.8\pm0.1$ & \color{dom}$\mathit{0.64\pm1.6}$ & \color{dom}$\mathit{0.017\pm0.033}$ \\
\bottomrule
    \end{tabular}
    \label{tab:results-15}
    }
\end{table*}

\subsection{Supplementary Materials for \cref{sec:ablation}}
\label{app:ablation}

\subsubsection{Effect of Safety Bias $\fimp$}
\label{app:fimp}
\begin{figure*}
\begin{subfigure}{\linewidth}
    \centering
    
        \includegraphics[width=\linewidth]{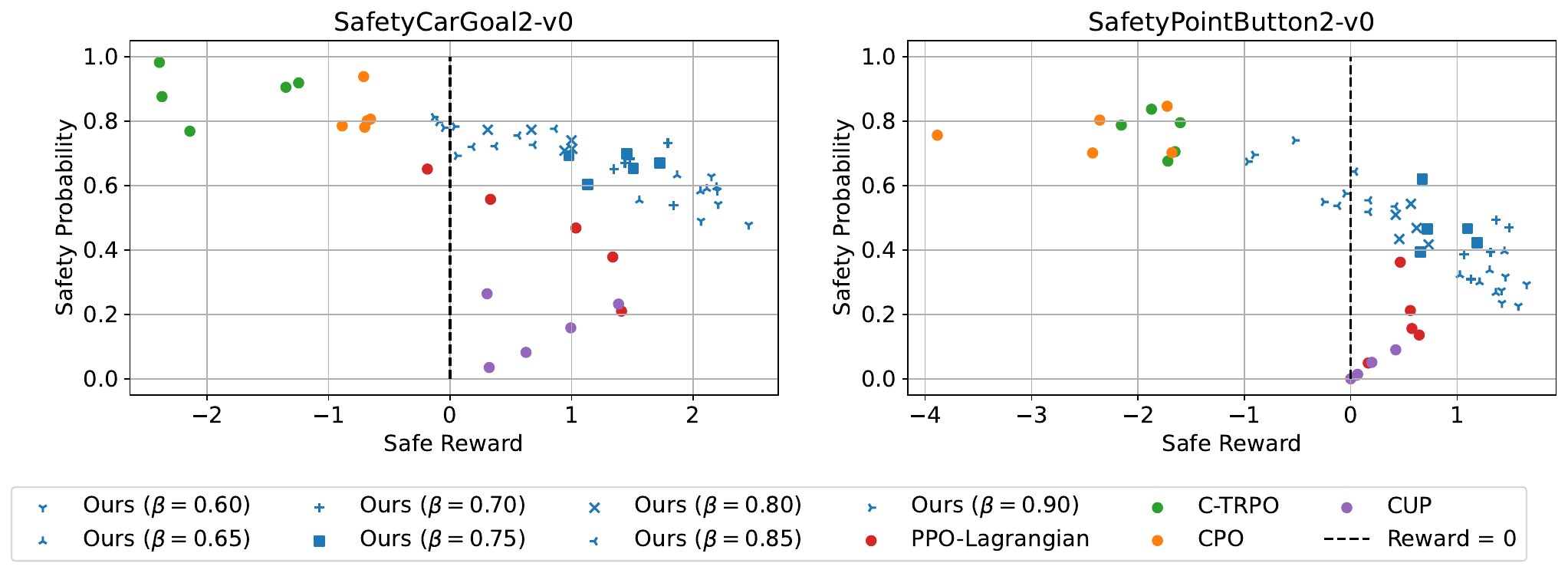}
    \caption{Safe reward and safety probability}
   \label{fig:fimp_scatter1} 
\end{subfigure}
\begin{subfigure}{\linewidth}
    \centering
    \includegraphics[width=\textwidth]{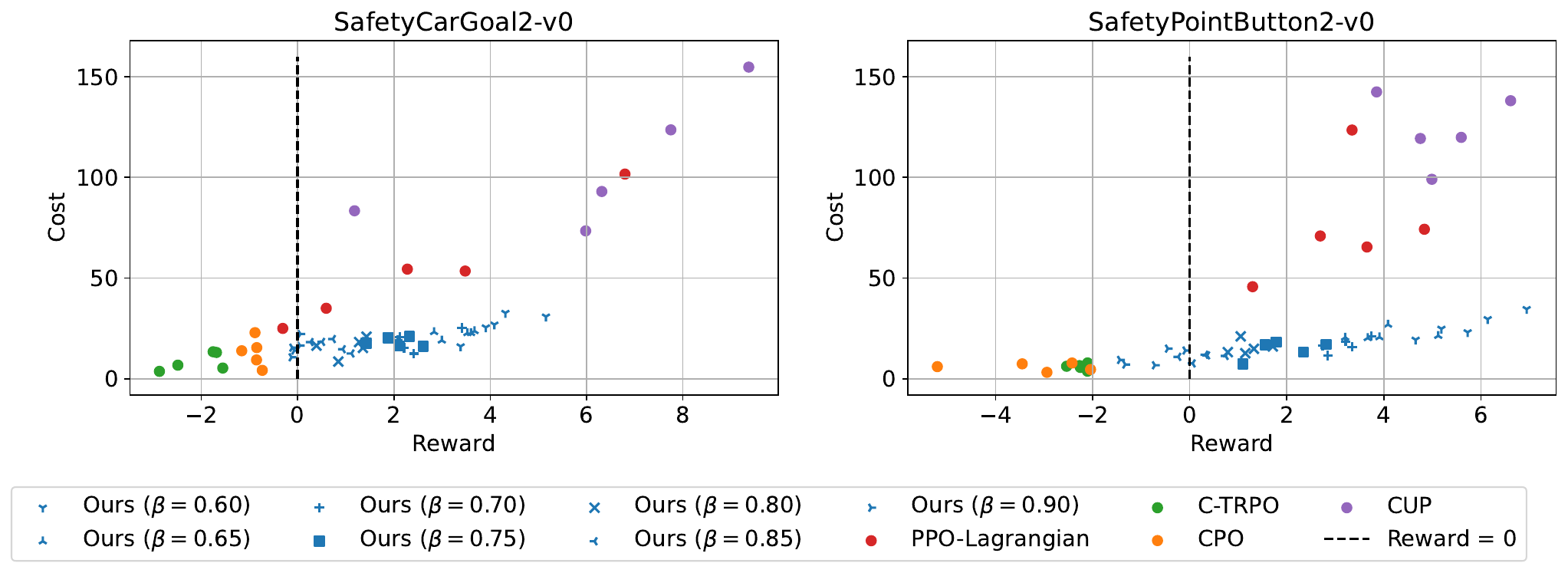}
    \caption{Reward and cost}
    \label{fig:fimp_scatter2}
\end{subfigure}
    \caption{Ablation study of safety bias $\fimp$}
    \label{fig:fimp_scatter} 
\end{figure*}
We present scatter plots for our approach compared to selected baselines in \cref{fig:fimp_scatter}. 
\alg consistently achieves the best balance of performance and feasibility across all benchmark tasks, demonstrating robustness to the choice of $\fimp$.

\subsubsection{Advantage Estimation}
\label{app:critic}

\begin{table*}[t!]
    \centering
    \caption{Effect of advantage estimation on performance}
    \resizebox{\textwidth}{!}{%
    \begin{tabular}{lccccccccc}
    \toprule
        \multirow{2}{*}{Advantage} & \multirow{2}{*}{$\fimp$} & \multicolumn{2}{c}{Car Goal} & \multicolumn{2}{c}{Point Goal} & \multicolumn{2}{c}{Car Circle} & \multicolumn{2}{c}{Swimmer Velocity} \\
\cmidrule(lr){3-10}
&  & Safe Rew.$^\uparrow$ & Safe Prob.$^\uparrow$ & Safe Rew.$^\uparrow$ & Safe Prob.$^\uparrow$ & Safe Rew.$^\uparrow$ & Safe Prob.$^\uparrow$ & Safe Rew.$^\uparrow$ & Safe Prob.$^\uparrow$ \\
\midrule

 MC & $0.75$ & ${1.4\pm0.4}$ & $0.66\pm0.08$ & ${1.6\pm0.5}$ & $0.68\pm0.07$ & $7.5\pm1.5$ & ${0.99\pm0.01}$ & $48\pm26$ & ${0.98\pm0.02}$ \\
 GAE & $0.75$ & $5.9\pm0.9$ & $0.40\pm0.07$ & $5.0\pm1.3$ & $0.51\pm0.09$ & $10.2\pm2.2$ & $0.96\pm0.07$ & $0.67\pm8.1$ & $1.0\pm0.0$\\
 GAE & $0.8$ & $2.3\pm	0.35$&	$0.62\pm	0.066$ & $1.7\pm	0.92$&	$0.69\pm	0.098$& $	8\pm	3.1$&	$0.99\pm	0.017$ & 	$18\pm	35$&	$0.98\pm	0.031$ \\
\bottomrule
    \end{tabular}
    \label{tab:critic}
    }
\end{table*}

The values of metrics regarding the ablation study about the effect of critics are provided in 
\cref{tab:critic,tab:time}.

\section{Novelty and Comparison to Existing Methods}

We proceed by highlighting our core contribution and conceptual novelty, and explaining why these lead to superior performance in practice.

\paragraph{Core contribution and conceptual novelty.} Whilst prior work usually addresses general CMDPs, our method targets the setting with hard constraints specifically, allowing it to excel in it.
Although our method builds on the trust-region framework of TRPO, the key conceptual novelty lies in introducing a generalised update rule capturing controlled, simultaneous progress on both safety and reward at every iteration.  
Specifially, the cost reduction per update is
\[
\epsilon \;\defeq\; -\,\beta\cdot\langle g_c, \Delta_c\rangle ,
\]
where $\Delta_c$ is the optimal solution to the cost-only linearised trust-region problem~\cref{eq:approxc}.  
This design has three important consequences.

\begin{itemize}
    \item \emph{No separate recovery phase.}
Unlike CPO and C-TRPO, our algorithm never switches into a feasibility-restoration mode.  
Safety and reward are addressed within the \emph{same} update, producing smoother learning dynamics and avoiding the abrupt, purely cost-driven behaviour characteristic of recovery phases.
\item \emph{CPO behaviour is recovered for $\beta = 1$.}
For $\beta=1$, the safety requirement matches exactly the zero-threshold constraint imposed by CPO, so our update includes CPO-like behaviour as a special case.  
For $\beta < 1$, the method is deliberately less aggressive in driving the policy toward feasibility, which—as our experiments show—prevents collapse into overly conservative, trivially safe states.
\item \emph{Guaranteed improvement of both cost and reward.}
As formalised in Theorem~\ref{thm:perf}, the update guarantees  
(i) strict cost improvement whenever $g_c \neq 0$, and  
(ii) strict reward improvement whenever the reward and cost gradients are not misaligned.  
Thus, reward improvement is guaranteed even \emph{before} feasibility is achieved—something CPO and related methods such as C-TRPO do not guarantee during its feasibility recovery/attainment phase.
\end{itemize}

Moreover, compared to Lagrangian methods, which update via 
$\Delta_r + \lambda \cdot \Delta_c$ with $\lambda$ that \emph{does not take the current reward and cost updates $\Delta_r$ and $\Delta_c$ into account} and increases monotonically under zero-cost thresholds, these methods typically either over- or under-emphasise safety. 
By contrast, we dynamically weight $\Delta_r$ and $\Delta_c$ with $\mu$ defined in \cref{eq:conv}, providing the aforementioned formal improvement guarantees for both reward and cost---\emph{something Lagrangian methods do not offer}.

In summary, our contribution is not a basic penalty modification of TRPO but a principled, guaranteed-improvement update rule for hard-constrained CMDPs that subsumes CPO as a special case while avoiding its limitations, notably its collapse into over-conservatism.

\medskip
\paragraph{Why CPO behaves poorly in hard-constraint settings.}
CPO attempts to maintain feasibility at every iteration.  
When constraints are violated, it prioritises cost reduction to restore feasibility.  
In hard-constraint regimes, this often drives the policy into \emph{trivially safe} regions (e.g., corners without hazards), which are far from the states where reward can be obtained (e.g., far from goals in navigation tasks).  
Escaping such regions typically requires temporary constraint violations, which CPO forbids, leading to stagnation.  
Moreover, CPO and related methods such as C-TRPO \emph{do not provide reward-improvement guarantees during this feasibility-recovery phase}.

\medskip
\textbf{Why our method behaves better.}
Because SB-TRPO is \emph{less greedy about attaining feasibility}, it avoids collapsing into trivially safe but task-ineffective regions. Besides, by \emph{not switching between separate phases for reward improvement and feasibility recovery} (as in CPO), our method exhibits markedly \emph{smoother learning dynamics}.

Moreover, \cref{fig:histr,fig:histc}, show that SB-TRPO’s update directions align well with \emph{both} the cost and reward gradients, whereas CPO’s updates are typically at best orthogonal to the reward gradient. This confirms empirically the performance improvement guarantee of \cref{thm:perf}, \emph{guaranteeing consistent progress in both safety and task performance}.

In summary, SB-TRPO yields \emph{smoother learning}, avoids collapse into conservative solutions, and enables \emph{steady reward accumulation} over iterations. Empirically, this leads to substantially higher returns than CPO while still maintaining very low safety violations.

\subsection{Comparison to C-TRPO}
\label{sec:ctrpo}
We continue with a more detailed comparison with C-TRPO \citep{ctrpo}.
\paragraph{Update Directions and Geometries.}  
C-TRPO modifies the TRPO trust-region \emph{geometry} by adding a barrier term that diverges near the feasible boundary. This yields a deformed divergence that increasingly penalises steps pointing towards constraint violation. When the constraint is violated, C-TRPO enters a dedicated \emph{recovery phase} in which the update direction becomes the pure cost gradient; reward improvement is not pursued during this phase.

SB-TRPO keeps the standard TRPO geometry fixed and instead modifies the \emph{update direction} itself: the step is the solution of a two-objective trust-region problem with a required fraction~$\beta$ of the optimal local cost decrease. This yields a \emph{dynamic convex combination} of the optimal reward and cost directions. Unlike C-TRPO:
\begin{itemize}[noitemsep,topsep=0pt]
    \item there is \textbf{no separate recovery phase},
    \item the update direction is always a mixture of $(\Delta_r,\Delta_c)$ rather than switching geometries, and
    \item \emph{reward improvement remains possible in every step} whenever the reward and cost gradients are sufficiently aligned.
\end{itemize}

In short:  
C-TRPO mixes \emph{geometries} (via a KL barrier that reshapes the trust region), while SB-TRPO dynamically mixes \emph{update directions}. In particular, during recovery C-TRPO uses $\Delta_c$, whereas SB-TRPO continues to use $\Delta$, the solution of the trust-region problem considering both reward and cost; see \cref{fig:conv}.

Finally, the safety-bias parameter~$\beta$ in SB-TRPO plays a role fundamentally different from the barrier coefficient in C-TRPO: $\beta$ determines the minimum fraction of the optimal local cost decrease required from each step, while the barrier coefficient in C-TRPO controls the strength of the geometric repulsion from the boundary.

\paragraph{Connection to CPO.}
Both algorithms can be seen to generalise CPO in different ways:
\begin{itemize}[noitemsep,topsep=0pt]
    \item \textbf{C-TRPO:} \cite{ctrpo} show in Prop. 4.2 that the approximate C-TRPO update approaches the CPO update as the coefficient of the barrier term vanishes, recovering CPO in the limit. Besides, both CPO and C-TRPO have an explicit recovery phase when the update is infeasible.
    \item \textbf{SB-TRPO:} As argued in \cref{sec:ideal}, setting \(\beta=1\) in SB-TRPO recovers CPO exactly and elegantly captures both the recovery and feasible update phases.
\end{itemize}

\paragraph{Theoretical Guarantees.}
Both algorithms have strong theoretical guarantees:
\begin{itemize}[noitemsep,topsep=0pt]
    \item \textbf{C-TRPO} provides \begin{itemize}[noitemsep,topsep=0pt]
    \item CPO-style ``almost'' reward-improvement guarantees for steps within the feasible region under the barrier geometry,
    \item bounds on constraint violations for feasible steps, and
    \item a time-continuous global convergence result under fairly strong assumptions (finite state/action spaces and ``regular'' policies \citep[Sec.~4.2]{ctrpo}).
\end{itemize}
Outside the feasible region (during the recovery phase), \emph{reward improvement is not guaranteed}.
\item
\textbf{SB-TRPO} guarantees for sufficiently small step sizes in the quadratic-approximation regime (\cref{thm:perf}):
\begin{enumerate}[noitemsep,topsep=0pt]
    \item cost strictly decreases at every iteration (unless cost gradient vanishes), 
    \item reward strictly improves whenever reward and cost gradients are sufficiently aligned.
\end{enumerate}
In particular, there is no recovery phase: \emph{even if cost is high, updates still improve reward} (as long gradients are not misaligned).

Besides, in the idealised setting of \pref{eq:general}, if neither cost nor reward improves further, the method arrives at a \emph{trust-region local optimum for cost} and a \emph{trust-region local optimum for the constrained problem} with the modified cost threshold (\cref{thm:general}).
\end{itemize}

In short, C-TRPO guarantees approximate reward improvement only within feasible steps, whereas SB-TRPO can guarantee reward improvement even for currently infeasible policies.

\section{Approximations}
\label{app:approx}
\dw{premises}
\begin{figure}[t!]
    \centering
    \begin{subfigure}{0.49\linewidth}
    \includegraphics[width=\linewidth]{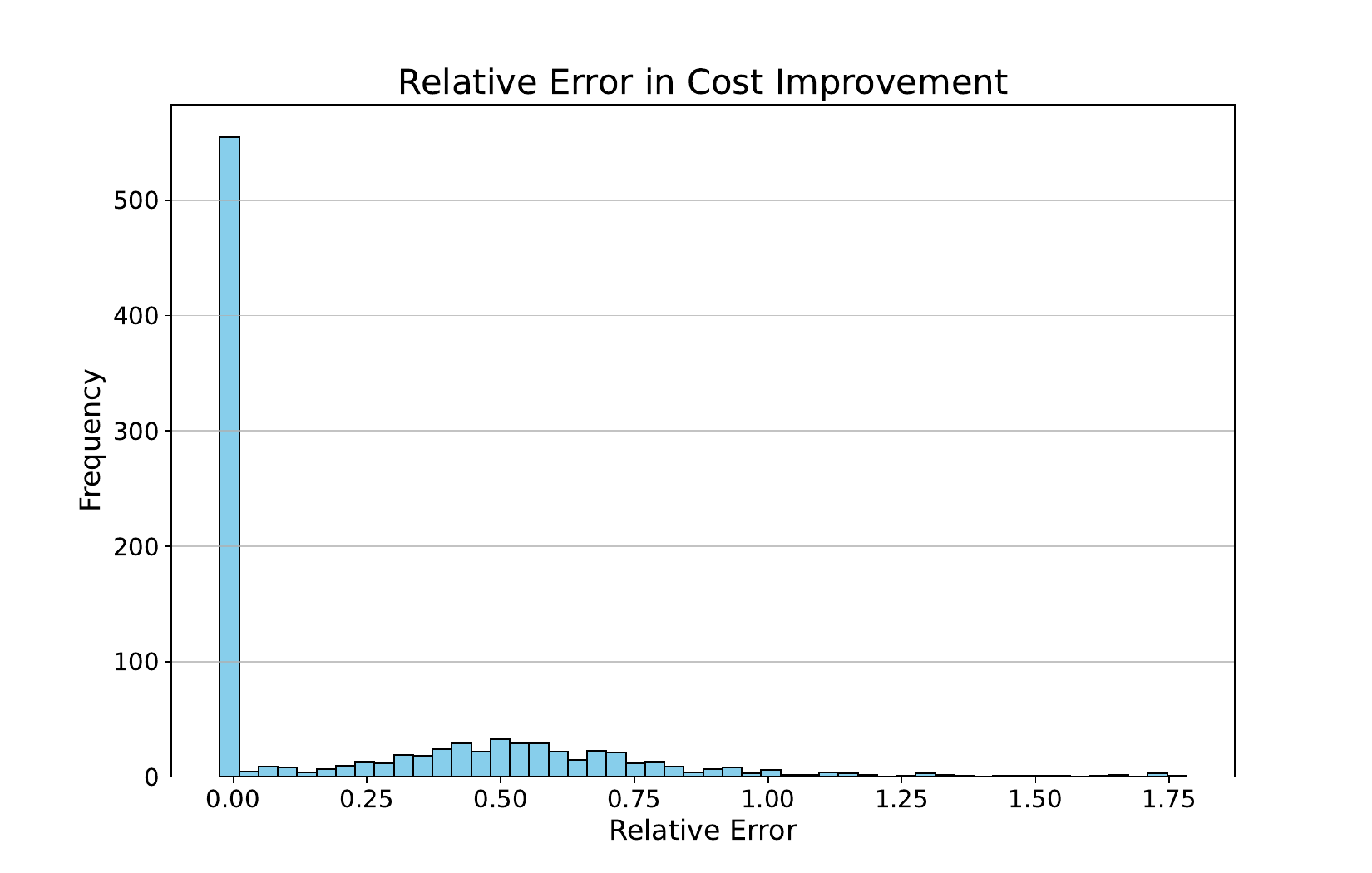}
    \caption{Cost}
    \end{subfigure}
    \begin{subfigure}{0.49\linewidth}
    \includegraphics[width=\linewidth]{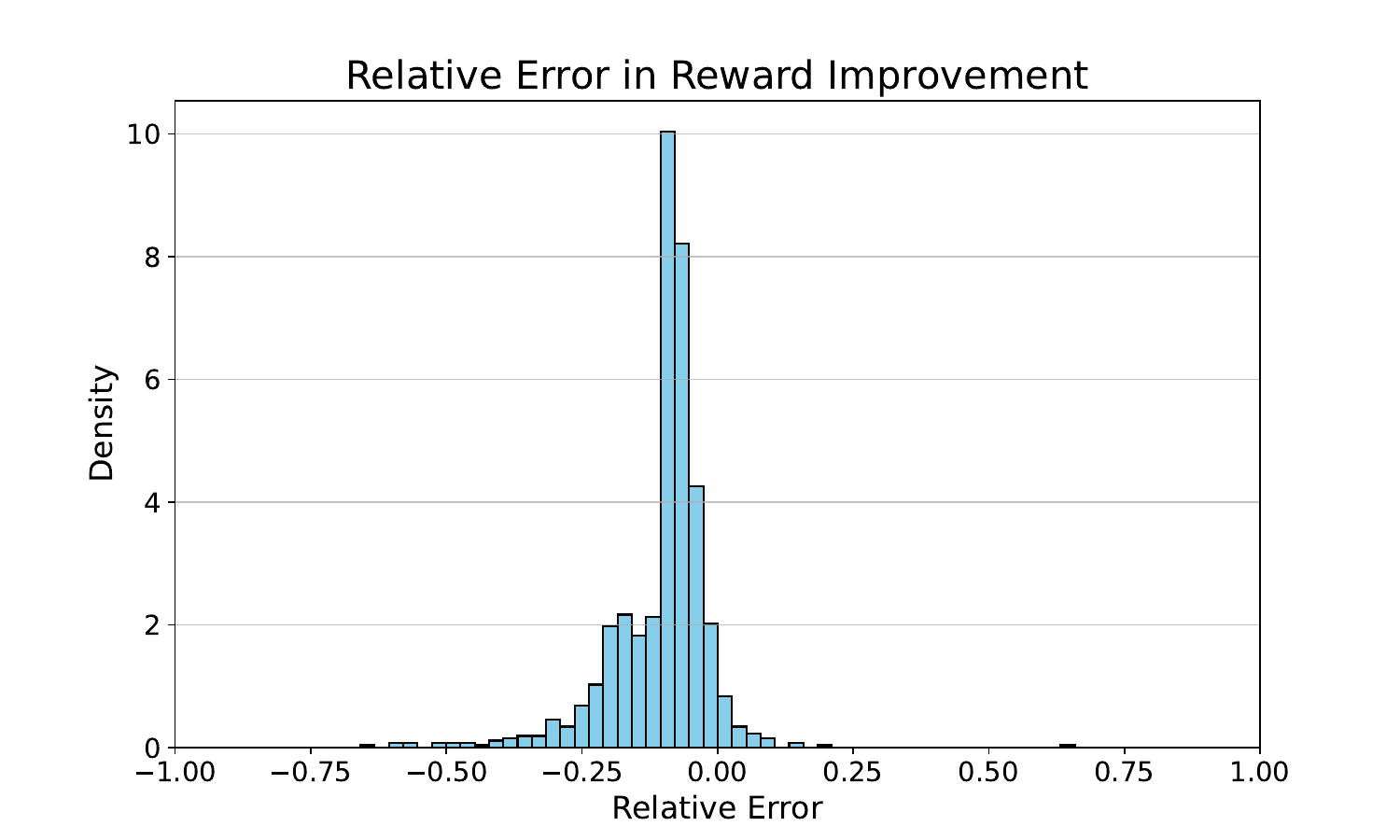}
        \caption{Reward}
    \end{subfigure}    
    \begin{subfigure}{0.49\linewidth}
    \includegraphics[width=\linewidth]{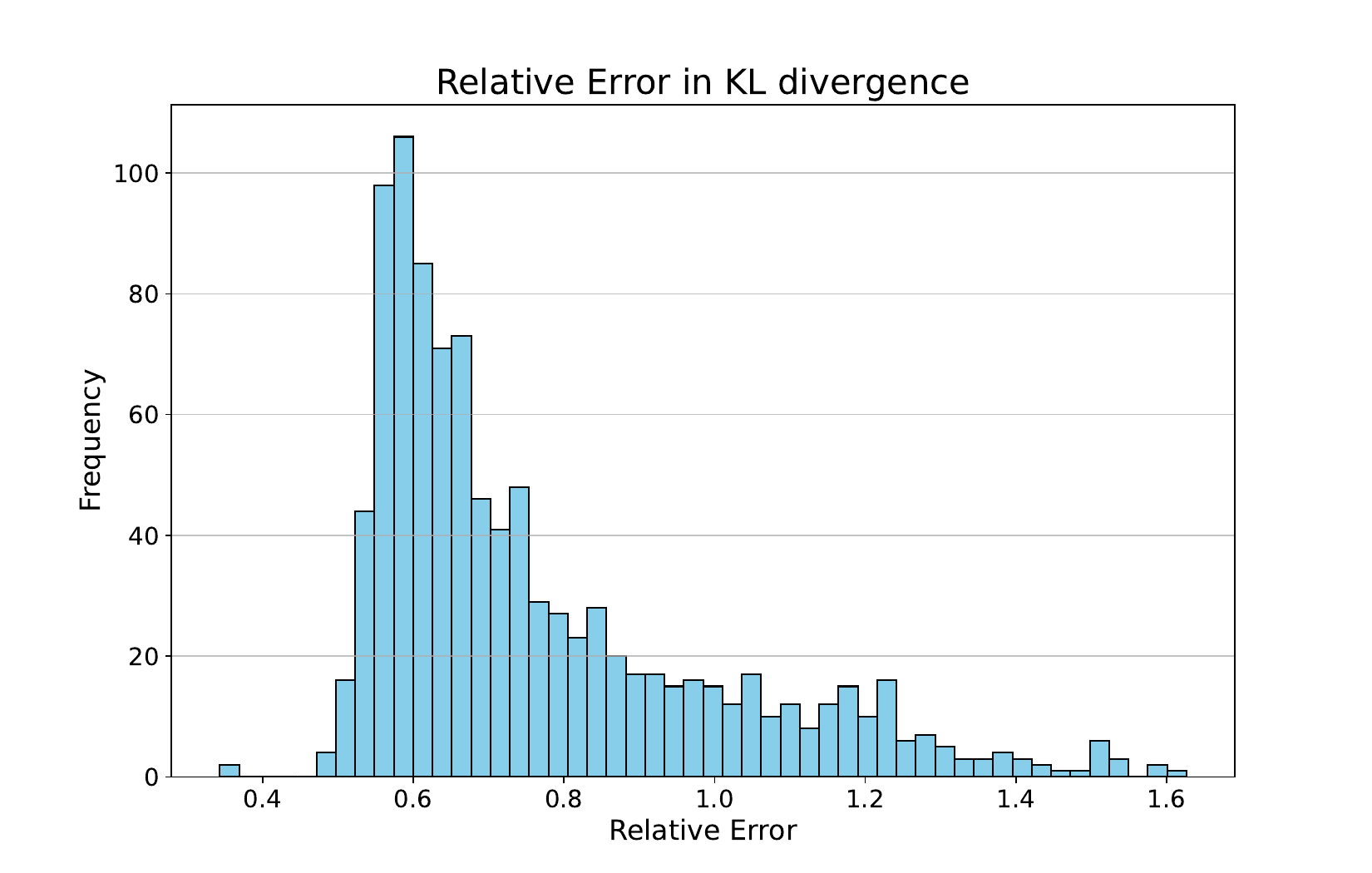}
        \caption{KL divergence}
    \end{subfigure}    
    \caption{Histograms of relative approximation errors ($\frac{\widehat x-x}x$).}
    \label{fig:histc}    
\end{figure}

As in most policy optimisation methods, we begin from an idealised \pref{eq:general} with theoretical guarantees (\cref{thm:general}) and introduce a sequence of controlled approximations to obtain a tractable algorithm.

The approximations fall into two categories:
\begin{itemize}[noitemsep,topsep=0pt]
    \item Our innovation: solving \pref{eq:approx} via a convex combination of reward and cost directions
    \item Standard trust-region approximations required for tractability
\end{itemize}

\paragraph{Our Innovation: Convex Combination.}
Instead of solving \pref{eq:approx} exactly, we approximate its solution via a convex combination of the reward and cost natural gradient directions (\cref{eq:approxc,eq:approxr}).

We provide the following guarantees:
\begin{enumerate}[noitemsep,topsep=0pt]
    \item The coefficient $\conv$ defined in \cref{eq:conv} ensures feasibility of \pref{eq:approx}, with a linearised decrease in cost of at least $\epsc$ (see \cref{lem:kl,lem:conv}).
    \item For sufficiently small step sizes this translates into a decrease in (\cref{thm:perf}).
    \item \cref{thm:perf} further ensures reward improvement when gradients are not adversarially aligned (angle $\leq 90^\circ$).
\end{enumerate}

Thus, while not necessarily optimal, the update is \emph{provably safe and improving under mild conditions}, which is the key requirement in the constrained setting.

\paragraph{Standard Practical Approximations.}
The remaining approximations follow established trust-region practices:

\begin{itemize}[noitemsep,topsep=0pt]
    \item Surrogate Objective: We replace returns with importance-sampling-based surrogates (Update 2). The policy improvement bound, coupled with the KL trust-region constraint, ensures a reliable local approximation.
    \item Quadratic Approximation: \pref{eq:ideal} is approximated using gradients and the Fisher information matrix for the KL divergence in \pref{eq:approx}.
    \item Numerical Efficiency: Natural policy gradient directions are computed approximately using Conjugate Gradient.
\end{itemize}

\paragraph{Line Search and Empirical Safeguards.}
To bridge the gap between theory and practice, we use backtracking line search (l.~12 in \cref{alg}), enforcing the KL constraint empirically to ensure surrogate accuracy and verifying the empirical reduction in surrogate cost loss.

\paragraph{Gradient Estimation.}
Finally, we note that gradient estimation is a fundamental challenge in RL. This issue is faced by all policy gradient and trust-region methods, and is not specific to our approach.

\paragraph{Empirical Validation.}
We empirically evaluate approximation accuracy by comparing predicted (first-/second-order) vs. actual surrogate improvements on the same samples. We include histograms \cref{fig:histc} of relative errors for cost/reward improvement and KL divergence.

We observe the following findings:
\begin{itemize}[noitemsep,topsep=0pt]
    \item Cost improvement is predicted accurately, supporting the practical validity of Thm. 4.2.
\item Reward improvement is under-predicted, indicating a conservative bias.
\item KL is over-approximated, resulting in stricter-than-necessary trust regions.
\end{itemize}

\paragraph{Conclusion.}
SB-TRPO builds on trusted TRPO/CPO approximations while introducing a convex combination update with performance improvement guarantees (\cref{thm:perf}). Empirical validation shows these approximations are conservative, and combined with line search, they ensure robustness even with gradient estimates.

\end{document}